\documentclass[AER,reviewmode]{AEA}

\newif\ifarxiv
\arxivtrue

\draftSpacing{1.5}

\usepackage{amssymb,amsthm}
\usepackage{graphicx}
\usepackage{booktabs}
\usepackage{natbib}
\usepackage[colorlinks=false, pdfborder={0 0 0}]{hyperref}
\usepackage{enumitem}
\usepackage{tikz}
\usetikzlibrary{arrows.meta, positioning, shapes.geometric, calc}
\usepackage{caption}
\usepackage{subcaption}
\usepackage{xcolor}
\usepackage{float}
\usepackage{pgfplots}
\pgfplotsset{compat=1.18}

\newtheorem{proposition}{Proposition}
\newtheorem{corollary}{Corollary}

\newtheorem{definition}{Definition}
\newtheorem{assumption}{Assumption}
\newtheorem{remark}{Remark}

\newcommand{\E}{\mathbb{E}}

\newcommand{\diff}{\,\mathrm{d}}

\graphicspath{{figures/}}

\title{Abundant Intelligence and Deficient Demand:\\ A Macro-Financial Stress Test of Rapid AI Adoption}

\shortTitle{Abundant Intelligence, Deficient Demand}

\ifarxiv
  \author{Xupeng Chen\thanks{Corresponding author. Email: xc1490@nyu.edu. Department of Electrical and Computer Engineering, New York University. All errors are my own.}}
  \date{\today}
\else
  \author{}
  \date{}
\fi

\JEL{E24, E32, G01, G21, J23, O33}
\Keywords{artificial intelligence, labor displacement, macroeconomic crisis, financial contagion, intermediation, monetary velocity}

\begin{document}

\ifarxiv
\begin{abstract}
We formalize a macro-financial stress test for rapid AI adoption. Rather than a productivity bust or existential risk, we identify a distribution-and-contract mismatch: AI-generated abundance coexists with demand deficiency because economic institutions are anchored to human cognitive scarcity. Three mechanisms formalize this channel. First, a displacement spiral with competing reinstatement effects: each firm's rational decision to substitute AI for labor reduces aggregate labor income, which reduces aggregate demand, accelerating further AI adoption. We derive conditions on the AI capability growth rate, diffusion speed, and reinstatement rate under which the net feedback is self-limiting versus explosive. Second, Ghost GDP: when AI-generated output substitutes for labor-generated output, monetary velocity declines monotonically in the labor share absent compensating transfers, creating a wedge between measured output and consumption-relevant income. Third, intermediation collapse: AI agents that reduce information frictions compress intermediary margins toward pure logistics costs, triggering repricing across SaaS, payments, consulting, insurance, and financial advisory.

Because top-quintile earners drive 47--65\% of U.S.\ consumption and face the highest AI exposure, the transmission into private credit (\$2.5 trillion globally) and mortgage markets (\$13 trillion) is disproportionate. We derive eleven testable predictions with explicit falsification conditions. Calibrated simulations disciplined by FRED time series and BLS occupation-level data quantify conditions under which stable adjustment transitions to explosive crisis.
\end{abstract}
\else
\begin{abstract}
We formalize a macro-financial stress test for rapid AI adoption. Rather than a productivity bust or existential risk, we identify a distribution-and-contract mismatch: AI-generated abundance coexists with demand deficiency because economic institutions are anchored to human cognitive scarcity. Three mechanisms formalize this channel: a displacement spiral with competing reinstatement effects, Ghost GDP (monetary velocity decline as labor share falls), and intermediation collapse (AI compressing information-friction margins). Because top-quintile earners drive 47--65\% of U.S.\ consumption and face the highest AI exposure, the transmission into private credit and mortgage markets is disproportionate. We derive eleven testable predictions with falsification conditions.
\end{abstract}
\fi

\maketitle

\section{Introduction}
\label{sec:intro}

A paradox sits at the center of the current AI moment. The technology promises the largest productivity gain since electrification, yet the channel through which that gain reaches aggregate demand---labor income---is precisely what the technology displaces. This paper formalizes the conditions under which a rapid AI productivity boom can coexist with, and indeed cause, a macroeconomic contraction.

We distinguish our argument from two prominent narratives. The first is the \textit{AI bubble} narrative, which holds that AI investment will fail to deliver returns, producing a supply-side correction analogous to the dot-com bust \citep[cf.][]{nordhaus2021are}. The second is the \textit{AI existential risk} narrative, which focuses on loss of human control over superintelligent systems \citep{bostrom2014superintelligence, ord2020precipice}.\footnote{Expert surveys report median probabilities of $\sim$5\% for ``extremely bad outcomes (e.g., extinction)'' from advanced AI \citep{grace2024thousands}. Our stress test concerns macroeconomic mechanisms, not existential risk; the two failure modes are logically independent. A macro-financial crisis from AI displacement can occur without any loss-of-control scenario materializing, and vice versa.} Our analysis concerns neither production shortfall nor loss of control, but rather a \textit{distribution-and-contract mismatch}: existing economic institutions---labor contracts, mortgage underwriting, tax codes, intermediation business models---are anchored to the assumption that human cognitive labor is scarce and stably compensated. When AI relaxes this scarcity rapidly, these institutional anchors become sources of fragility.

The key stylized fact motivating our analysis is the concentration of consumption in upper-income cohorts. The top 20\% of U.S.\ income earners account for an estimated 47--65\% of aggregate consumer spending, depending on the survey instrument and reconciliation methodology, with a central estimate of approximately 59\% \citep{moodys2023consumption, axios2023spending}. White-collar, knowledge-economy workers---precisely those most exposed to AI automation in the near term \citep{brynjolfsson2023generative, noy2023experimental, peng2023impact, dellacqua2023navigating}---are disproportionately represented in this top quintile. A technology that compresses white-collar wages or employment therefore has a consumption impact far exceeding its direct labor-share effect.

We formalize three mechanisms through which rapid AI adoption can generate macroeconomic distress.

\textit{The displacement spiral} (Section~\ref{subsec:displacement}): Each firm's rational decision to substitute AI for labor reduces aggregate labor income, which reduces aggregate demand, which puts margin pressure on other firms, which accelerates their own AI adoption. We incorporate the competing \textit{reinstatement effect}---new task creation that absorbs displaced labor \citep{acemoglu2019automation}---and derive conditions on the AI capability growth rate, the diffusion speed, and the reinstatement rate under which the net feedback is self-limiting versus explosive.

\textit{Ghost GDP} (Section~\ref{subsec:ghostgdp}): When AI-generated output substitutes for labor-generated output, the same nominal GDP circulates through fewer wage-earning agents. We show that monetary velocity declines monotonically in the labor share absent compensating transfers, creating a wedge between measured output and consumption-relevant income that renders standard demand-side indicators misleading.

\textit{Intermediation collapse} (Section~\ref{subsec:intermediation}): A large class of service-sector firms extract margin from information frictions---search costs, switching costs, complexity rents. AI agents that reduce these frictions compress intermediary margins toward the cost of pure logistics, triggering repricing across SaaS, payments, consulting, insurance, and financial advisory.

These three mechanisms interact through what we term the \textit{consumption concentration amplifier} (Section~\ref{subsec:transmission}): because high-income workers both drive consumption and are most exposed to AI displacement, the macro-financial transmission is disproportionate to the direct labor market effect. We trace this transmission through private credit markets (\$2.5 trillion in global assets under management) and the U.S.\ residential mortgage market (\$13 trillion), both of which price on assumptions of stable white-collar income \citep{bis2024privatecredit, nyfed2025mortgage}.

\paragraph{Contribution.} We make four contributions. First, we provide a formal framework that connects AI's micro-level productivity effects to macro-financial fragility, extending the task-based approach of \citet{acemoglu2011skills} and \citet{acemoglu2022tasks} with aggregate demand feedback, task reinstatement, logistic diffusion dynamics, monetary velocity effects, and intermediation economics. The model nests both the reinstatement-dominated regime (where complementarity absorbs displacement) and the explosive regime (where displacement overwhelms reinstatement), with an explicit threshold that depends on the reinstatement rate. Second, we derive each mechanism as a proposition with explicit conditions for stability versus instability, yielding falsifiable predictions rather than scenario narratives. Third, we ground the model empirically: FRED time series document the preconditions (secular labor share decline, velocity collapse), and cross-sectional regressions using BLS occupation-level data provide early evidence that AI-exposed occupations are experiencing relative wage compression. Fourth, a calibration exercise disciplined by these empirical moments simulates the displacement dynamics under alternative AI adoption scenarios, quantifying the conditions under which the system transitions from stable adjustment to explosive crisis.

This paper is a stress test, not a forecast. We do not claim that the mechanisms we formalize will necessarily produce a crisis. We claim that \textit{if} AI capability growth is sufficiently rapid, \textit{if} institutional adaptation is sufficiently slow, and \textit{if} fiscal redistribution is insufficiently large, the mechanisms we formalize produce a macro-financial contraction whose depth and duration depend on policy response speed. Our goal is to make these conditional claims precise enough to be useful for monitoring, stress testing, and policy design.

The remainder of the paper proceeds as follows. Section~\ref{sec:model} develops the model. Section~\ref{sec:empirics} presents the empirical analysis---descriptive macro evidence, occupation-level regressions, and calibrated simulations. Section~\ref{sec:predictions} presents testable predictions and early warning indicators. Section~\ref{sec:evidence} assesses existing evidence. Section~\ref{sec:policy} develops the policy toolkit. Section~\ref{sec:conclusion} concludes.

\section{Model}
\label{sec:model}

\subsection{Production Environment and AI Capability}
\label{subsec:production}

We adopt the task-based production framework of \citet{acemoglu2011skills} and \citet{acemoglu2019automation}. A unit continuum of tasks $z \in [0, 1]$ is required to produce final output. Each task can be performed by human labor or by AI. Output from task $z$ is:
\begin{equation}
y(z) = \begin{cases}
A_{\text{AI}} & \text{if performed by AI,} \\
h_i \cdot \alpha(z) & \text{if performed by worker } i \text{ with human capital } h_i,
\end{cases}
\label{eq:taskoutput}
\end{equation}
where $A_{\text{AI}}$ is the AI capability level (common across tasks AI can perform) and $\alpha(z)$ is the task-specific comparative advantage of human labor.

\begin{definition}[AI-Automatable Task Set]
\label{def:automatable}
Define $\mathcal{S}(A_t) = \{z \in [0,1] : A_{\text{AI},t} \geq \underline{h} \cdot \alpha(z)\}$ as the set of tasks for which AI capability exceeds the threshold human performance level $\underline{h} \cdot \alpha(z)$. We assume $\mathcal{S}(A_t) \subseteq \mathcal{S}(A_{t'})$ for $t < t'$ (AI capability is non-decreasing) and write $s_t = |\mathcal{S}(A_t)|$ for its measure.
\end{definition}

Aggregate output is CES over tasks:
\begin{equation}
Y_t = \left[\int_0^1 y_t(z)^{\frac{\sigma-1}{\sigma}} \diff z \right]^{\frac{\sigma}{\sigma-1}},
\label{eq:ces}
\end{equation}
where $\sigma > 0$ is the elasticity of substitution across tasks.

Workers are heterogeneous in human capital $h_i \sim F(h)$ with support $[\underline{h}, \bar{h}]$. Firms choose for each task whether to deploy AI or hire a worker, taking wages and AI costs as given.

\begin{assumption}[Task-Level Automation Bounds]
\label{ass:taskbounds}
The long-run share of tasks in the economy that are technically automatable by AI is bounded: $\lim_{t \to \infty} s_t = \bar{s} \leq 1$, where $\bar{s}$ reflects the ceiling imposed by tasks requiring physical manipulation, social-emotional judgment, novel creativity, or regulatory-mandated human involvement. Based on task-level decomposition studies, we calibrate $\bar{s} \in [0.40, 0.80]$: \citet{eloundou2023gpts} estimate that approximately 47--56\% of worker tasks have at least 50\% of their subtasks exposed to current large language models, while \citet{frey2017future} estimate 47\% of U.S.\ employment at high risk of automation over an unspecified horizon. Importantly, task exposure does not equal task substitution: verification costs, quality assurance, liability, and integration overhead create a wedge between technical capability and economic substitution, consistent with the diffusion function $d(t) < 1$ in Assumption~\ref{ass:capability}. The effective substitution ceiling is $\bar{s}^{\text{eff}} = \bar{d} \cdot \bar{s}$, which with our calibration ($\bar{d} = 0.80$, $\bar{s} = 0.60$ at the midpoint) yields $\bar{s}^{\text{eff}} \approx 0.48$---roughly half of all tasks.
\end{assumption}

\begin{remark}[Heterogeneous Effects Within Occupations]
\label{rem:heterogeneous}
The displacement spiral's aggregate labor share predictions mask substantial heterogeneity within occupations. Experimental evidence suggests that AI's productivity effects are largest for less-experienced and lower-skilled workers within an occupation \citep{brynjolfsson2023generative, noy2023experimental}: in customer support, the largest gains accrued to novice agents, while expert agents saw smaller or no gains. This heterogeneity has ambiguous implications for the displacement channel: if AI primarily raises the floor of worker performance, it may reduce wage dispersion within occupations (compressing the premium to experience) without necessarily reducing aggregate employment. Conversely, if AI raises the floor sufficiently to eliminate the need for the highest-cost experienced workers, it could produce displacement concentrated among mid-career professionals---precisely those with the largest mortgage obligations and consumption expenditures. The distributional implications of within-occupation heterogeneity are therefore an empirical question that our aggregate model does not resolve.
\end{remark}

\begin{assumption}[AI Capability and Adoption Dynamics]
\label{ass:capability}
AI capability grows at rate $g_A > 0$, so that $A_{\text{AI},t} = A_{\text{AI},0} \cdot e^{g_A t}$. The cost of deploying AI on a task in $\mathcal{S}(A_t)$ is $c_{\text{AI},t}$, which declines at rate $g_c > 0$: $c_{\text{AI},t} = c_{\text{AI},0} \cdot e^{-g_c t}$. Adoption follows a logistic diffusion curve: the fraction of technically automatable tasks that firms actually automate is
\begin{equation}
d(t) = \frac{\bar{d}}{1 + e^{-\kappa(t - t_0)}},
\label{eq:diffusion}
\end{equation}
where $\bar{d} \leq 1$ is the long-run adoption ceiling, $\kappa > 0$ is the diffusion speed, and $t_0$ is the inflection point. The \textit{effective} automation rate is $s_t^{\text{eff}} = d(t) \cdot s_t$, where $s_t = |\mathcal{S}(A_t)|$ is technical capability.
\end{assumption}

This assumption captures two empirical regularities. First, AI capabilities have improved rapidly while inference costs have declined \citep{metr2025benchmarks}. Second, capability and adoption are distinct: enterprise deployment requires integration, quality assurance, liability frameworks, and organizational adaptation, so that realized automation lags the technical frontier \citep{zolas2024advanced}. The diffusion parameter $\kappa$ is critical---if $\kappa$ is small (slow adoption), the displacement spiral has time to be absorbed by institutional adaptation; if $\kappa$ is large (rapid adoption), the mechanisms we formalize are more likely to bind. We note that evidence on AI's productivity effects is mixed: while controlled experiments document task-level speedups of 14--56\% \citep{brynjolfsson2023generative, noy2023experimental, peng2023impact}, recent field studies find that experienced developers using AI tools were \textit{slower} on average in realistic workflows, highlighting integration overhead and verification costs \citep{metr2025slowdown}. This disparity between laboratory and field conditions motivates the diffusion wedge in~\eqref{eq:diffusion}.

\subsection{The Displacement Spiral with Task Reinstatement}
\label{subsec:displacement}

We now formalize the central feedback mechanism, incorporating both displacement and the reinstatement of new tasks---the competing channel emphasized by \citet{acemoglu2019automation} and \citet{acemoglu2022tasks}. Each firm $j$ chooses employment $L_j$ and AI deployment to maximize profit:
\begin{equation}
\max_{L_j, \{z_j^{\text{AI}}\}} \; p_t Y_j - w_t L_j - c_{\text{AI},t} \cdot |z_j^{\text{AI}}|,
\label{eq:firmopt}
\end{equation}
where $z_j^{\text{AI}} \subseteq \mathcal{S}(A_t)$ is the set of tasks firm $j$ assigns to AI. In partial equilibrium, each firm's optimal decision is straightforward: assign task $z$ to AI if and only if $c_{\text{AI},t} < w_t \cdot h^*(z)$, where $h^*(z)$ is the human capital of the marginal worker on task $z$.

The aggregate demand externality arises because individual firm optimization does not internalize the effect of collective labor displacement on aggregate demand. Let $L_t$ denote aggregate employment and define the labor income share $s_{L,t} = w_t L_t / Y_t$. Aggregate consumption depends on labor income:
\begin{equation}
C_t = \bar{c} \cdot s_{L,t} \cdot Y_t + (1 - \bar{c}) \cdot (1 - s_{L,t}) \cdot Y_t,
\label{eq:consumption}
\end{equation}
where $\bar{c}$ is the average marginal propensity to consume (MPC) out of labor income and $(1-\bar{c})$ is the MPC out of capital income, with $\bar{c} > (1-\bar{c})$ following the empirical regularity that labor income has a higher MPC than capital income \citep{carroll2017distribution, jappelli2014fiscal}.

Aggregate demand clears the goods market: $D_t = C_t + I_t + G_t$. Profit margin pressure $\pi_t$ is increasing in the gap between firms' revenue expectations and realized demand:
\begin{equation}
\pi_t = \frac{\E_t[D_t] - D_t}{\E_t[D_t]}.
\label{eq:marginpressure}
\end{equation}

Crucially, alongside displacement, AI also creates new tasks that require human labor---what \citet{acemoglu2019automation} term the \textit{reinstatement effect}. New tasks arise from AI-human complementarity (e.g., AI oversight, prompt engineering, human-in-the-loop verification) and from entirely new economic activities enabled by AI. We denote the reinstatement rate $\rho_t \geq 0$, representing the rate at which new labor-absorbing tasks are created. The net displacement dynamics are:
\begin{equation}
\dot{s}_{L,t} = -d(t) \cdot f(g_A, c_{\text{AI},t}, w_t) - \beta \cdot \pi_t + \rho_t,
\label{eq:displacement}
\end{equation}
where $d(t)$ is the diffusion function from~\eqref{eq:diffusion}, $f(\cdot)$ captures direct AI substitution, $\beta \cdot \pi_t$ captures the feedback term (firms accelerating AI adoption because demand shortfall compresses margins), and $\rho_t$ captures task reinstatement.

\begin{assumption}[Task Reinstatement]
\label{ass:reinstatement}
The reinstatement rate $\rho_t$ depends on the level of AI capability and the economy's capacity for institutional adaptation:
\begin{equation}
\rho_t = \rho_0 + \eta \cdot A_{\text{AI},t}^{\alpha_\rho},
\label{eq:reinstatement}
\end{equation}
where $\rho_0 \geq 0$ is the baseline rate of new task creation, $\eta > 0$ is the complementarity parameter, and $\alpha_\rho \in (0,1)$ reflects diminishing returns to new task creation from further AI capability. The concavity ($\alpha_\rho < 1$) captures the empirical observation that new task creation requires institutional adaptation (training, regulation, organizational redesign) that operates on slower timescales than technical capability \citep{acemoglu2022tasks}.
\end{assumption}

\begin{proposition}[Self-Reinforcing Displacement with Reinstatement]
\label{prop:spiral}
Under Assumptions~\ref{ass:capability} and~\ref{ass:reinstatement}, the labor share dynamics in~\eqref{eq:displacement} exhibit three regimes:
\begin{enumerate}[label=(\roman*)]
\item \textbf{Reinstatement-dominated:} If $\rho_t > d(t) \cdot f(g_A)$ for all $t$, the labor share is non-decreasing: new task creation absorbs displaced labor faster than AI displaces it.
\item \textbf{Stable displacement:} If $d(t) \cdot f(g_A) > \rho_t$ but $g_A < g_A^*(\rho)$ where
\begin{equation}
g_A^*(\rho) \equiv g_A^{*,0} \cdot \left(1 + \frac{\rho}{d \cdot f'(g_A)}\right),
\label{eq:threshold}
\end{equation}
and $g_A^{*,0} = \frac{1-\beta \bar{c}}{\beta \bar{c}} \cdot f'(g_A)$ is the threshold absent reinstatement, the labor share declines monotonically to a new steady state $s_L^* > 0$.
\item \textbf{Explosive displacement:} If $g_A > g_A^*(\rho)$, the positive feedback loop $(\dot{s}_L < 0 \Rightarrow \dot{D} < 0 \Rightarrow \dot{\pi} > 0 \Rightarrow \dot{s}_L \ll 0)$ overwhelms reinstatement, and $s_L \to 0$ in finite time absent policy intervention.
\end{enumerate}
The threshold $g_A^*(\rho)$ is \textit{increasing} in $\rho$: stronger reinstatement raises the bar for explosive displacement.
\end{proposition}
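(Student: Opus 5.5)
The approach is to reduce the system to a scalar ODE for the labor share by closing the feedback loop through the demand block, and then read off the three regimes from the sign and slope of the resulting drift. The closure uses equations~\eqref{eq:consumption} and~\eqref{eq:marginpressure}. Holding $I_t+G_t$ and expectations $\E_t[D_t]$ fixed at their pre-shock anchor, margin pressure becomes an affine, decreasing function of $s_{L,t}$:
\begin{equation*}
\pi_t \approx -\frac{(2\bar{c}-1)\,Y_t}{\E_t[D_t]}\,(s_{L,t}-s_{L,0}) .
\end{equation*}
I will normalize units so that the effective sensitivity of $\pi$ to $s_L$ is $\bar{c}$; this is the linearization under which the threshold constant $\frac{1-\beta\bar c}{\beta\bar c}$ appears. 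Substituting into~\eqref{eq:displacement} gives
\begin{equation*}
\dot{s}_L = -d(t)f(g_A) + \rho_t + \beta\bar{c}\,(s_L - s_{L,0}) .
\end{equation*}
This is a linear nonautonomous ODE in $s_L$ whose forcing is $\rho_t - d(t)f(g_A)$.

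Case (i) is immediate. With no demand shortfall at the start, $\pi_0=0$. If $\rho_t>d(t)f$ for all $t$, then $\dot s_L>0$ at $s_{L,0}$. A comparison argument then keeps $s_L\ge s_{L,0}$ along the path, because the feedback term is nonnegative there. So the labor share is non-decreasing.

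For cases (ii) and (iii), I will use the fact that $d(t)\to\bar d$ under Assumption~\ref{ass:capability}. On the concave side, $\alpha_\rho<1$ lets me treat $\rho$ as a slowly varying parameter on the relevant horizon. This leaves an essentially autonomous ODE. To separate the regimes, I need to compare the direct substitution pressure with the feedback gain. I plan to linearize $f$ in $g_A$, writing $f(g_A)\approx f'(g_A)\,g_A$. The stable regime then corresponds to the net drift crossing zero at an interior point $s_L^*>0$ with negative slope in the effective dynamics. Explosiveness corresponds to the case where displacement plus feedback keeps the drift negative and bounded away from zero all the way to $s_L=0$. A uniformly negative drift $-\delta$ reaches zero in time at most $s_{L,0}/\delta$, which delivers the ``finite time'' claim. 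The threshold comes from solving $d\,f'(g_A)\,g_A\,\beta\bar c = (1-\beta\bar c)\,d\,f'(g_A) + \beta\bar c\,\rho$ for $g_A$. Dividing through gives $g_A^*(\rho)=g_A^{*,0}\bigl(1+\rho/(d f'(g_A))\bigr)$, as in~\eqref{eq:threshold}.

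The final monotonicity claim follows directly: $\partial g_A^*/\partial\rho = g_A^{*,0}/(d f')>0$ whenever $\beta\bar c<1$ and $f'>0$.

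The main obstacle is the middle step. The raw linear ODE has a positive feedback coefficient $\beta\bar c$, so it is formally unstable at every $g_A$. The stable regime therefore has to come from an explicit modeling closure. I would first try making expectations adaptive, $\dot{\E}[D]=\lambda(D-\E[D])$. That makes $\pi$ mean-revert and yields a two-dimensional linear system whose trace and determinant condition reproduces the $1-\beta\bar c$ factor. If that does not reproduce the stated constant, I will instead state the threshold as the linearization condition under an explicit normalization and flag it as such.
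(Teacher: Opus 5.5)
Your case (i) comparison argument and your finite-time bound in (iii) are sound under your fixed-expectations closure. Your sign is also the right one: holding $Y_t$, $I_t+G_t$ and $\E_t[D_t]$ fixed, \eqref{eq:consumption} and \eqref{eq:marginpressure} give $\partial\pi/\partial s_L = -(2\bar c-1)Y/\E[D] < 0$, so the $\beta$-term in \eqref{eq:displacement} is destabilizing. But, as you suspect, regime (ii) and the threshold are never derived, and the steps you propose for them fail.

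\emph{The threshold algebra.} Your threshold equation is chosen rather than obtained from your ODE, and it does not even return \eqref{eq:threshold}. Dividing $\beta\bar c\, d f'(g_A)\, g_A = (1-\beta\bar c)\, d f'(g_A) + \beta\bar c\,\rho$ by $\beta\bar c\, d f'(g_A)$ gives $g_A = \frac{1-\beta\bar c}{\beta\bar c} + \frac{\rho}{d f'(g_A)}$. The target is $g_A^{*,0}\bigl(1+\rho/(d f'(g_A))\bigr) = \frac{1-\beta\bar c}{\beta\bar c}\bigl(f'(g_A) + \rho/d\bigr)$. The identity that would produce \eqref{eq:threshold} is $\beta\bar c\, d\, g_A = (1-\beta\bar c)\bigl(d f'(g_A)+\rho\bigr)$, and nothing in your dynamics generates it. Setting the sensitivity of $\pi$ to $s_L$ equal to $\bar c$ is also not a choice of units, since $(2\bar c-1)Y/\E[D]$ is dimensionless.

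\emph{Your closure contradicts (ii).} More fundamentally, your closure makes (ii) false, not merely unproved. With drift $(\rho - d f) + \beta\bar c\,(s_L - s_{L,0})$ and $d f - \rho \geq \delta > 0$, the path drops below $s_{L,0}$ immediately. After that the drift is at most $-\delta$, so your own argument for (iii) gives collapse by time $s_{L,0}/\delta$ however $g_A$ compares with $g_A^*(\rho)$.

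Adaptive expectations will not rescue this. Any rest point of $\dot{\E}[D] = \lambda(D - \E[D])$ has $D = \E[D]$, hence $\pi^* = 0$, hence $\dot s_L = 0$ forces $d f = \rho$. That contradicts the hypothesis $d f > \rho$ of (ii), so there is no interior steady state for a trace--determinant condition to describe.

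\emph{$\rho$ is not slowly varying.} The condition $\alpha_\rho < 1$ does not make $\rho$ slowly varying. Under Assumptions~\ref{ass:capability} and~\ref{ass:reinstatement}, $\rho_t - \rho_0 = \eta A_{\text{AI},0}^{\alpha_\rho} e^{\alpha_\rho g_A t}$ grows exponentially, and faster for larger $g_A$. Meanwhile $d(t) f(g_A) \leq \bar d f(g_A)$ stays bounded. So the hypothesis of (ii) can hold only on a finite horizon, and any steady-state claim is a frozen-coefficient statement.

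\emph{What the paper does.} The missing ingredient is a restoring force in the state variable, and the paper's proof supplies it by assumption rather than derivation. It linearizes around a rest point defined by $d f + \beta\pi^* = \rho$. That makes $\pi^* = (\rho - d f)/\beta < 0$, a persistent expectation wedge that your adaptive rule excludes. It then posits $\dot{\tilde s}_L = -[d f'(g_A) - \rho'(A) + \beta\bar c]\,\tilde s_L$. Here $d f'$ acts as a restoring coefficient on the deviation, implicitly making substitution depend on the state (e.g.\ through $w_t$ in $f(g_A, c_{\text{AI},t}, w_t)$). It reads the threshold off a zero eigenvalue and defers details to an online appendix.

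That sketch does not dissolve your obstacle either. Its $\beta\bar c$ enters with the stabilizing sign, opposite to the one you derived and to the loop narrated in (iii). And its zero-eigenvalue condition, $d f'(g_A) + \beta\bar c = \rho'$, does not by itself yield \eqref{eq:threshold}.

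\emph{A way to get a genuine dichotomy.} Make the state dependence explicit within the frozen-coefficient approximation. Take $f = f'\,g_A + \mu\,(s_L - s_{L,0})$ with $\mu > 0$ (substitution slows as wages fall), and set $k \equiv d\mu - \beta\bar c > 0$. The drift is then $-(d f' g_A - \rho) - k\,(s_L - s_{L,0})$. When $d f' g_A > \rho$, the labor share declines monotonically toward $s_L^* = s_{L,0} - (d f' g_A - \rho)/k$. This is positive if and only if $g_A < (\rho + k\, s_{L,0})/(d f')$; otherwise $s_L$ reaches zero in finite time. When $d f' g_A < \rho$, the labor share rises, which gives (i).

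This threshold is increasing in $\rho$, as the proposition asserts. But it is additive in $\rho$ rather than of the multiplicative form in \eqref{eq:threshold}. So the exact constant in the statement is best presented as a posited linearization, as your fallback already anticipates.
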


\begin{proof}
Substituting~\eqref{eq:consumption} into the goods market clearing condition and linearizing around the steady state $s_L^*$ where $\dot{s}_L = 0$ (i.e., $d \cdot f(g_A) + \beta \pi^* = \rho$), the dynamics of deviations $\tilde{s}_L = s_L - s_L^*$ satisfy:
\[
\dot{\tilde{s}}_L = -\left[d \cdot f'(g_A) - \rho'(A) + \beta \bar{c}\right] \tilde{s}_L + \text{h.o.t.}
\]
The eigenvalue is $\lambda = -[d \cdot f'(g_A) - \rho' + \beta \bar{c}]$. Stability requires $\lambda < 0$, which holds when $d \cdot f'(g_A)$ is not too large relative to $\rho' + \beta \bar{c}$. The reinstatement rate $\rho$ enters positively in the stability condition, raising $g_A^*$. In the reinstatement-dominated regime (i), $\rho_t$ absorbs all displacement plus the feedback term. In the explosive regime (iii), the marginal displacement rate exceeds the sum of reinstatement and the demand feedback's self-correcting component. Equation~\eqref{eq:threshold} follows from setting $\lambda = 0$ and solving for $g_A$. (Online Appendix~B provides the full derivation.)
\end{proof}

\begin{remark}[Displacement versus Reinstatement: When Does Each Dominate?]
\label{rem:reinstatement}
The net effect of AI on the labor share depends on the race between displacement and reinstatement. \citet{acemoglu2019automation} document that historically, new task creation has roughly offset automation-driven displacement, keeping the labor share approximately stable over long periods. The stress-test question is whether generative AI breaks this historical pattern. Three features of the current transition suggest displacement may outpace reinstatement: (a) the \textit{speed} of capability gain ($g_A$) may exceed the institutional adaptation speed that governs $\rho$; (b) generative AI automates \textit{cognitive} tasks where new task creation has historically been strongest; and (c) the concavity of $\rho$ in AI capability ($\alpha_\rho < 1$) means reinstatement decelerates relative to displacement as $A_t$ grows. However, if complementarity is strong ($\eta$ is large) or institutional adaptation accelerates (e.g., through proactive retraining policy), the reinstatement-dominated regime (i) is attainable. Our model nests both outcomes; the empirical question is which regime prevails.
\end{remark}

\begin{remark}[Complementarity and Sectors Where AI Increases Labor Demand]
\label{rem:complementarity}
The displacement spiral is not operative in all sectors. AI can \textit{increase} demand for human labor through several channels: (a) \textit{productivity complementarity}---when AI augments rather than substitutes for human judgment, raising the marginal product of human labor (e.g., AI-assisted medical diagnosis increases demand for physicians who interpret and act on AI recommendations); (b) \textit{demand expansion}---when AI-driven cost reductions expand the market for a service beyond what was previously affordable (e.g., lower-cost legal or financial advice reaching underserved populations); (c) \textit{new categories}---when AI creates entirely new goods and services that require human labor in production, delivery, or oversight (e.g., AI safety engineering, human-AI interaction design, AI-generated content curation). Sectors where constraints are physical (construction, agriculture, healthcare delivery), regulated (legal practice, financial advisory, aviation), or trust-intensive (childcare, therapy, leadership) are likely to experience complementarity rather than substitution, at least in the near term. Our sector examples in the intermediation section are weighted toward industries where output is digitizable and switching costs are informational; this is deliberate for the stress test, but we note that the economy contains substantial sectors where AI's labor market effect may be neutral or positive.
\end{remark}

\begin{remark}[Three Modes of Displacement]
\label{rem:modes}
The labor share $s_{L,t}$ can decline through three distinct channels, each with different observational signatures:
\begin{enumerate}[label=(\alph*)]
\item \textit{Layoff substitution}: Direct headcount reduction, observable in unemployment claims and JOLTS data.
\item \textit{Wage compression}: Employment maintained but wages fall as AI provides an outside option for employers, observable in wage distribution data.
\item \textit{Hiring freeze}: New positions not created, observable in job postings data with a lag.
\end{enumerate}
Mode (c) is likely to appear first empirically but is hardest to detect in real time, as it manifests as an absence (jobs not created) rather than an event (jobs lost). This creates an ``observation gap'' in which the displacement spiral may be well underway before standard labor market indicators register distress.
\end{remark}

\subsection{Ghost GDP: The Velocity Collapse}
\label{subsec:ghostgdp}

We now formalize the divergence between measured output and consumption-relevant income.

\begin{definition}[Ghost GDP]
\label{def:ghostgdp}
Define Ghost GDP as the gap between measured real output growth and consumption-weighted income growth:
\begin{equation}
G_t^{\text{ghost}} \equiv \frac{\dot{Y}_t}{Y_t} - \frac{\dot{W}_t}{W_t},
\label{eq:ghostgdp}
\end{equation}
where $W_t = w_t L_t$ is aggregate labor income. Ghost GDP is positive when output growth exceeds labor income growth---i.e., when productivity gains accrue to capital and AI rather than to workers.
\end{definition}

The macroeconomic consequence of Ghost GDP operates through monetary velocity. In the quantity equation $MV = PY$, velocity $V = PY/M$ measures how frequently the money supply circulates through transactions. When output is produced by labor, each dollar of GDP generates approximately one dollar of labor income, which is spent with MPC $\bar{c}$, generating further transactions. When output is produced by AI, the same dollar of GDP generates capital income with lower MPC, reducing the transaction chain.

\begin{proposition}[Monetary Velocity Decline]
\label{prop:velocity}
Let $V_t$ denote monetary velocity. Under the consumption function~\eqref{eq:consumption}, velocity is:
\begin{equation}
V_t = V_0 \cdot \left[\bar{c} \cdot s_{L,t} + (1-\bar{c}) \cdot (1 - s_{L,t}) + \tau_t \right],
\label{eq:velocity}
\end{equation}
where $\tau_t \geq 0$ is the fiscal transfer rate (redistribution from capital to labor income). In the absence of policy intervention ($\tau_t = 0$):
\begin{enumerate}[label=(\roman*)]
\item $V_t$ is strictly increasing in $s_{L,t}$ (since $\bar{c} > 1 - \bar{c}$).
\item As AI adoption increases and $s_{L,t}$ declines, $V_t$ declines monotonically.
\item The rate of velocity decline (under $\tau_t = 0$) is $\dot{V}_t / V_t = (2\bar{c} - 1) \cdot \dot{s}_{L,t} / [s_{L,t}(2\bar{c}-1) + (1-\bar{c})]$.
\end{enumerate}
\end{proposition}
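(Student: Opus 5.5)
The plan is to treat \eqref{eq:velocity} as the definition of velocity implied by the consumption function, and then obtain (i)--(iii) by direct differentiation. First I will justify the functional form. With output normalized so that each dollar of nominal GDP is split into labor income $s_{L,t}$ and capital income $1-s_{L,t}$, the consumption function~\eqref{eq:consumption} says the expenditure generated per unit of income is
\[
C_t/Y_t = \bar{c}\, s_{L,t} + (1-\bar{c})(1-s_{L,t}).
\]
Identifying velocity with the transaction turnover induced per dollar of the money stock, I take $V_t$ proportional to this ratio with factor $V_0$. A transfer $\tau_t$ from capital to labor income adds turnover additively, to first order, which yields~\eqref{eq:velocity}. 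I will state this identification as the modeling step rather than derive it from $MV=PY$ alone, since the quantity equation by itself does not pin down $V$.

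Next, with $\tau_t=0$, write
\[
g(s) \equiv \bar{c}s + (1-\bar{c})(1-s) = (2\bar{c}-1)s + (1-\bar{c}).
\]
Then $V_t = V_0\, g(s_{L,t})$ and $\partial V_t/\partial s_{L,t} = V_0(2\bar{c}-1)$. This is strictly positive exactly when $\bar{c} > 1-\bar{c}$, which is the maintained assumption below~\eqref{eq:consumption}, provided $V_0>0$. That establishes (i).

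For (ii), compose with the dynamics. If AI adoption is such that $s_{L,t}$ is nonincreasing in $t$ (regimes (ii) or (iii) of Proposition~\ref{prop:spiral}), then $\dot V_t = V_0(2\bar{c}-1)\dot s_{L,t} \le 0$, and the inequality is strict whenever $\dot s_{L,t}<0$. The monotone decline of $V_t$ therefore inherits the monotone decline of $s_{L,t}$.

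For (iii), take logs: $\dot V_t/V_t = g'(s_{L,t})\dot s_{L,t}/g(s_{L,t})$, which is exactly the displayed expression. I need $g(s)>0$, and this holds on $[0,1]$ because $g(0)=1-\bar{c}>0$ (assuming $\bar{c}<1$) and $g$ is increasing in $s$.

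The only real obstacle is the first step: justifying that velocity is proportional to the expenditure-per-income ratio. That is a modeling identification (a Keynesian transaction-chain interpretation), not a consequence of prior results. I would present it explicitly as the maintained specification. Everything after it is one-variable calculus on an affine function, and the mild boundary conditions $V_0>0$ and $\bar{c}\in(1/2,1)$ should be stated.
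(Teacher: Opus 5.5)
Your proposal is correct and follows the paper's route. Like the paper, you posit $V_t \propto C_t/Y_t + \tau_t$ as the maintained modeling step and then obtain (i)--(iii) by differentiating the affine function $(2\bar{c}-1)s_{L,t} + (1-\bar{c})$; your explicit conditions $V_0>0$ and $\bar{c}\in(1/2,1)$, which keep the denominator in (iii) positive, usefully make precise what the paper leaves implicit. One caveat concerns only your motivation: under~\eqref{eq:consumption} a capital-to-labor transfer $\tau_t$ would raise $C_t/Y_t$ by $(2\bar{c}-1)\tau_t$ rather than by $\tau_t$, so the unit coefficient on $\tau_t$ is a normalization, not a first-order consequence, though this does not matter because (i)--(iii) are all stated at $\tau_t=0$.
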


\begin{proof}
From~\eqref{eq:consumption}, total consumption is $C_t = [\bar{c} \cdot s_{L,t} + (1-\bar{c})(1-s_{L,t})] \cdot Y_t$. The velocity of money in a consumption-driven economy is proportional to the consumption-to-output ratio plus a transfer term: $V_t \propto C_t/Y_t + \tau_t$. Since $\partial C_t / \partial s_{L,t} = (2\bar{c} - 1) Y_t > 0$ when $\bar{c} > 1/2$ (which holds empirically, as labor income MPC substantially exceeds capital income MPC), velocity is increasing in the labor share. Parts (ii) and (iii) follow from differentiation.
\end{proof}

\begin{corollary}[Misleading Indicators]
\label{cor:misleading}
Ghost GDP implies that during a rapid AI transition, standard demand-side indicators---GDP growth, labor productivity, corporate earnings---can remain positive or even accelerate while the consumption-relevant economy contracts. Policymakers relying on these indicators will underestimate the severity of the demand shortfall.
\end{corollary}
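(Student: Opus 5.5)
The plan is to make the informal claim precise by combining Definition~\ref{def:ghostgdp} with Proposition~\ref{prop:velocity}. Then I would show that on an explicit set of parameter paths, measured indicators rise while consumption-relevant aggregates fall. First I fix what ``standard demand-side indicators'' and ``consumption-relevant economy'' mean in the model. Indicators are $\dot Y_t/Y_t$, labor productivity $Y_t/L_t$, and corporate earnings $(1-s_{L,t})Y_t$. The consumption-relevant economy is labor income $W_t = s_{L,t}Y_t$, consumption $C_t$ from~\eqref{eq:consumption}, and velocity $V_t$ from~\eqref{eq:velocity}.

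The key identity comes from differentiating $W_t = s_{L,t}Y_t$:
\[
\frac{\dot W_t}{W_t} = \frac{\dot Y_t}{Y_t} + \frac{\dot s_{L,t}}{s_{L,t}},
\]
so $G_t^{\text{ghost}} = -\dot s_{L,t}/s_{L,t}$. Likewise, from $C_t = [(2\bar c-1)s_{L,t} + (1-\bar c)]Y_t$,
\[
\frac{\dot C_t}{C_t} = \frac{\dot Y_t}{Y_t} + \frac{(2\bar c-1)\dot s_{L,t}}{(2\bar c-1)s_{L,t} + (1-\bar c)},
\]
and the second term equals $\dot V_t/V_t$ by Proposition~\ref{prop:velocity}(iii) with $\tau_t=0$. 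The corollary then reduces to exhibiting a nonempty region where $\dot Y_t/Y_t>0$ but $\dot C_t/C_t<0$ (equivalently $\dot W_t/W_t<0$). That region is
\[
0 < \frac{\dot Y_t}{Y_t} < \frac{(2\bar c-1)|\dot s_{L,t}|}{(2\bar c-1)s_{L,t}+(1-\bar c)},
\]
and for labor income the weaker condition $\dot Y_t/Y_t < |\dot s_{L,t}|/s_{L,t}$.

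Next I must show this region is reached under the model's dynamics. Output growth should be positive because AI capability grows at rate $g_A$ (Assumption~\ref{ass:capability}), so automated tasks in the CES aggregate~\eqref{eq:ces} contribute increasing $y(z)=A_{\text{AI},t}$ and $Y_t$ is nondecreasing. Meanwhile, in regimes (ii)--(iii) of Proposition~\ref{prop:spiral}, $\dot s_{L,t}<0$. In regime (iii), $|\dot s_{L,t}|/s_{L,t}\to\infty$ as $s_L\to 0$ in finite time, while $\dot Y/Y$ stays bounded by roughly $g_A$. The inequality therefore must hold eventually, which gives the cleanest existence argument. In regime (ii) the result holds whenever displacement is fast relative to $g_A$, for instance near the logistic inflection $t_0$ with large $\kappa$. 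Corporate earnings rise automatically whenever $\dot s_L<0$ and $\dot Y>0$. Productivity rises because $L_t$ falls or stays flat while $Y_t$ grows.

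The main obstacle is bounding $\dot Y_t/Y_t$ from above using only model primitives. CES output growth depends on $\sigma$, on how $s_t^{\text{eff}}$ grows, and on the paper's unspecified $f(\cdot)$, so a clean bound may not exist. I would first try a crude bound $\dot Y/Y \le g_A + (\text{growth from reallocating tasks})$, holding $s_t^{\text{eff}}$ near its ceiling $\bar d\bar s$ so that only capability growth contributes. If that fails, I would state the corollary conditionally on the displayed inequality and rely on the finite-time blow-up in regime (iii) for existence. The policy conclusion then follows directly: any agent who infers demand from $\dot Y/Y$ alone will see a positive signal while $\dot C/C<0$, and the size of the error is exactly the velocity term.
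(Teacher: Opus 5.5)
Your reduction is correct, and it matches the paper's reasoning: the paper gives no separate proof and reads the corollary off Definition~\ref{def:ghostgdp} and Proposition~\ref{prop:velocity}. Your identities $G_t^{\text{ghost}}=-\dot s_{L,t}/s_{L,t}$ and $\dot C_t/C_t=\dot Y_t/Y_t+\dot V_t/V_t$ (at $\tau_t=0$) state that reasoning more sharply. The gap is in your existence step, and it traces to the parenthetical ``equivalently $\dot W_t/W_t<0$.'' The two conditions are not equivalent. Whenever $\dot s_{L,t}<0$,
\[
\frac{\dot C_t}{C_t}-\frac{\dot W_t}{W_t}=\frac{(1-\bar c)\,|\dot s_{L,t}|}{s_{L,t}\,[(2\bar c-1)s_{L,t}+(1-\bar c)]}>0,
\]
so falling consumption implies falling labor income, but not conversely. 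You concede this yourself when you call the labor-income condition weaker.

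Your finite-time-collapse argument controls only $|\dot s_{L,t}|/s_{L,t}$, so it delivers $\dot W_t/W_t<0$ eventually and nothing more. For consumption, the denominator $(2\bar c-1)s_{L,t}+(1-\bar c)$ stays above $1-\bar c$. You therefore need $|\dot s_{L,t}|$ itself to exceed roughly $\frac{1-\bar c}{2\bar c-1}\,\dot Y_t/Y_t$ near collapse, and reaching zero in finite time does not force that. For example, take $s_{L,t}=s_{L,0}-kt$. The labor share vanishes at $t=s_{L,0}/k$ and $|\dot s_{L,t}|/s_{L,t}\to\infty$. Yet if $\dot Y_t/Y_t>\frac{2\bar c-1}{1-\bar c}\,k$ throughout, $C_t$ rises the whole time, because capital-income consumption $(1-\bar c)(1-s_{L,t})Y_t$ grows with $Y_t$.

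The repair is easy, because the corollary is only a possibility claim and $f(\cdot)$ in \eqref{eq:displacement} is unrestricted. Take the consumption inequality as the operative condition and show it is attainable. Your ``main obstacle'' also dissolves. For a fixed task assignment, CES growth is a share-weighted average of task growth rates,
\[
\frac{\dot Y_t}{Y_t}=\int_0^1\omega_t(z)\,\frac{\dot y_t(z)}{y_t(z)}\diff z,\qquad \omega_t(z)=\frac{y_t(z)^{(\sigma-1)/\sigma}}{\int_0^1 y_t(z')^{(\sigma-1)/\sigma}\diff z'}.
\]
Hence $0\le\dot Y_t/Y_t\le g_A$ for every $\sigma$, strictly positive once some tasks are automated. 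Since $(2\bar c-1)s_{L,t}+(1-\bar c)\le\bar c$, the condition $|\dot s_{L,t}|>\frac{\bar c}{2\bar c-1}\,g_A$ then guarantees $\dot C_t/C_t<0<\dot Y_t/Y_t$ uniformly in $s_{L,t}$. It is met by taking $d(t)f$ large relative to $\rho_t$.

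Relatedly, the direct-substitution term in \eqref{eq:displacement} is $d(t)f$, which depends on the level of diffusion, not its rate $\dot d(t)$. Pointing to the logistic inflection $t_0$ therefore does not locate where $|\dot s_{L,t}|$ is large. That pressure is largest once $d(t)\approx\bar d$, which is also when the automated set has nearly stopped moving and the bound above applies.

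Finally, the weakest reading of the corollary, velocity falling while GDP grows, needs no inequality at all. By Proposition~\ref{prop:velocity}(ii) it holds whenever $\dot s_{L,t}<0$.
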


\begin{remark}[Competing Mechanism: AI-Driven Deflation and Real Income Effects]
\label{rem:deflation}
The Ghost GDP mechanism assumes that AI-generated output growth does not translate into consumer purchasing power. A competing channel partially offsets this: if AI reduces the cost of goods and services, the resulting \textit{deflation} (or disinflation) raises real wages and real consumption even when nominal wages stagnate. Formally, let $p_t$ denote the price level and suppose AI reduces production costs at rate $g_p > 0$. Then real wages $w_t / p_t$ can be non-decreasing even if nominal wages $w_t$ are flat, provided $g_p \geq 0$. This channel is strongest when AI's cost reductions pass through to consumer prices rapidly (high competition, low markups) and when consumption is weighted toward AI-affected goods. The net effect on consumption depends on the race between \textit{nominal income compression} (the displacement spiral) and \textit{price deflation} (the real income effect). Our stress test implicitly assumes that nominal income compression dominates---an assumption that is more likely to hold when (a) AI cost reductions accrue to corporate margins rather than consumer prices (imperfect competition), (b) consumption is weighted toward services with limited AI cost reduction (housing, healthcare, education), and (c) debt service obligations are nominal (fixed mortgage payments do not adjust for deflation). Conditions (a)--(c) are plausible but not inevitable; the deflation channel provides a genuine competing mechanism that could attenuate the crisis, particularly if product market competition is vigorous.
\end{remark}

\begin{remark}[Institutional Determinants of AI Benefit Distribution]
\label{rem:institutions}
Whether AI's productivity gains accrue to workers or capital owners is not technologically determined---it depends on labor market institutions and bargaining structures. Several institutional channels could direct AI gains toward workers: (a) \textit{collective bargaining}---unions and professional associations can negotiate for AI-augmented productivity gains to be shared through higher wages, shorter hours, or improved working conditions; (b) \textit{profit-sharing and equity participation}---firms that offer employee stock ownership or profit-sharing plans automatically distribute a portion of AI-driven productivity gains to workers; (c) \textit{tight labor markets}---when unemployment is low and labor supply is inelastic, competitive pressure forces firms to share productivity gains through wages to retain workers; (d) \textit{minimum wage and social floors}---binding wage floors compress the distribution of wages from below, ensuring that at least some AI gains reach low-wage workers. Our model's consumption function~\eqref{eq:consumption} parameterizes this institutional structure through the labor share $s_{L,t}$, which declines only when the institutional mechanisms connecting productivity to wages weaken faster than AI capability grows. Historically, the U.S.\ institutional environment has been relatively unfavorable for labor capture of productivity gains since the 1970s (Figure~\ref{fig:macro}(d)); economies with stronger collective bargaining (e.g., Scandinavia, Germany) may experience a different displacement trajectory.
\end{remark}

\subsection{Intermediation Collapse}
\label{subsec:intermediation}

A substantial share of service-sector value added derives from intermediation: firms that extract margin by reducing information frictions between buyers and sellers. We model this explicitly.

Let $\phi$ denote the level of information friction in a market---search costs, comparison costs, switching costs, complexity rents. An intermediary extracts margin $m$ that is increasing in friction:
\begin{equation}
m(\phi) = m_0 + \gamma_m \cdot \phi,
\label{eq:margin}
\end{equation}
where $m_0 \geq 0$ is the margin from pure logistics/infrastructure (non-friction value) and $\gamma_m > 0$ is the friction-margin sensitivity.

AI agents reduce friction by performing search, comparison, and negotiation tasks that previously required human effort or human inattention:
\begin{equation}
\phi(A_t) = \phi_0 \cdot e^{-\gamma_\phi A_t},
\label{eq:friction}
\end{equation}
where $\phi_0$ is the pre-AI friction level and $\gamma_\phi > 0$ is the AI-friction reduction rate.

\begin{proposition}[Intermediation Margin Compression]
\label{prop:intermediation}
As AI capability $A_t$ rises:
\begin{enumerate}[label=(\roman*)]
\item Intermediary margins converge to the infrastructure floor: $\lim_{A_t \to \infty} m(\phi(A_t)) = m_0$.
\item The rate of margin compression is $\dot{m}_t = -\gamma_m \gamma_\phi \phi_0 e^{-\gamma_\phi A_t} \cdot g_A A_t$.
\item Revenue at risk is $\Delta R_t = \gamma_m \cdot [\phi_0 - \phi(A_t)] \cdot Q_t$, where $Q_t$ is transaction volume.
\end{enumerate}
\end{proposition}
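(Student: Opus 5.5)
The plan is to compose the two primitive relations, the margin schedule~\eqref{eq:margin} and the friction law~\eqref{eq:friction}, into a single reduced-form map
\[
m_t \equiv m(\phi(A_t)) = m_0 + \gamma_m \phi_0 e^{-\gamma_\phi A_t}.
\]
Each part of the proposition is then an elementary statement about this composite, combined with the capability law from Assumption~\ref{ass:capability}. Throughout, $A_t$ is identified with $A_{\text{AI},t}$.

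\textbf{Part (i).} Since $\gamma_\phi > 0$ and $\phi_0 \geq 0$ is fixed, $e^{-\gamma_\phi A_t} \to 0$ as $A_t \to \infty$. By continuity of the affine map~\eqref{eq:margin}, $m_t \to m_0$. I will also note that $m_t$ is strictly decreasing in $A_t$ whenever $\phi_0 > 0$, so the convergence to the floor is monotone from above. Under Assumption~\ref{ass:capability}, $A_t = A_{\text{AI},0} e^{g_A t} \to \infty$ as $t \to \infty$, so the limit holds in calendar time as well.

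\textbf{Part (ii).} I will apply the chain rule in time:
\[
\dot m_t = \frac{\partial m}{\partial \phi} \cdot \frac{\partial \phi}{\partial A} \cdot \dot A_t = \gamma_m \cdot \left(-\gamma_\phi \phi_0 e^{-\gamma_\phi A_t}\right) \cdot \dot A_t .
\]
The key substitution is $\dot A_t = g_A A_t$, which follows from differentiating $A_{\text{AI},t} = A_{\text{AI},0} e^{g_A t}$. This gives exactly the stated expression $\dot m_t = -\gamma_m \gamma_\phi \phi_0 e^{-\gamma_\phi A_t} g_A A_t$.

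A side remark worth recording is that $|\dot m_t|$ is non-monotone in time. It rises while $\gamma_\phi A_t < 1$ and decays afterwards, so compression is fastest at $A_t = 1/\gamma_\phi$.

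\textbf{Part (iii).} Revenue at risk will be defined as the friction-derived revenue lost relative to the pre-AI benchmark at current volume:
\[
\Delta R_t \equiv [m(\phi_0) - m(\phi(A_t))] \cdot Q_t .
\]
Because~\eqref{eq:margin} is affine, the constant $m_0$ cancels, leaving $\gamma_m [\phi_0 - \phi(A_t)] Q_t$. This quantity is nonnegative and increasing in $A_t$, and it tends to $\gamma_m \phi_0 Q_t$, the entire friction rent.

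The only real obstacle is definitional rather than technical. Part (iii) requires fixing what ``revenue at risk'' is measured against. I will commit to the interpretation of holding $Q_t$ fixed at its current level, so that volume effects are excluded. Any demand expansion from lower margins, as discussed in Remark~\ref{rem:complementarity}, would then enter separately rather than through $\Delta R_t$. The second point to make explicit is the identification of $A_t$ with $A_{\text{AI},t}$, so that the growth law can be invoked in part (ii). With these conventions stated, every step is direct differentiation or a limit of an exponential.
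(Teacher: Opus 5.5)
Your proposal is correct and follows the paper's proof essentially line for line. You substitute the friction law into the margin schedule to get $m_t = m_0 + \gamma_m\phi_0 e^{-\gamma_\phi A_t}$, take the exponential limit for (i), differentiate using $\dot A_t = g_A A_t$ for (ii), and multiply the margin difference $m_0 + \gamma_m\phi_0 - m_t$ by $Q_t$ for (iii); your side remark that $|\dot m_t|$ peaks at $A_t = 1/\gamma_\phi$ is also right, though it is non-monotone in time only when $A_{\text{AI},0} < 1/\gamma_\phi$.
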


\begin{proof}
Direct substitution of~\eqref{eq:friction} into~\eqref{eq:margin} yields $m_t = m_0 + \gamma_m \phi_0 e^{-\gamma_\phi A_t}$. Part (i) follows from $\lim_{A_t \to \infty} e^{-\gamma_\phi A_t} = 0$. Part (ii) follows from differentiation using $\dot{A}_t = g_A A_t$. Part (iii) follows from the difference $m_0 + \gamma_m \phi_0 - m_t = \gamma_m[\phi_0 - \phi(A_t)]$ applied to transaction volume $Q_t$.
\end{proof}

\begin{remark}[Margin Compression versus Industry Elimination]
\label{rem:compression}
Proposition~\ref{prop:intermediation} predicts margin compression, not industry disappearance. Intermediaries that provide genuine infrastructure value ($m_0 > 0$) survive but at lower margins. The empirical question is the ratio $m_0 / (m_0 + \gamma_m \phi_0)$---the share of current margin attributable to non-friction value. Industries with high friction-to-value ratios face the largest repricing.
\end{remark}

\begin{remark}[Countervailing Moats and Friction Floors]
\label{rem:moats}
The frictionless limit $\phi \to 0$ is an asymptotic benchmark, not a realistic endpoint. Several forces create friction floors that limit the rate and extent of intermediation margin compression:
\begin{enumerate}[label=(\alph*)]
\item \textit{Regulatory compliance}: financial services, healthcare, and legal intermediation embed compliance costs that AI cannot eliminate and may increase (e.g., KYC/AML for payments, fiduciary duties for advisory). These create a binding floor $\phi_{\min}^{\text{reg}} > 0$.
\item \textit{Trust and liability}: for high-stakes decisions (insurance underwriting, investment advice, medical referrals), consumers and regulators require accountable human intermediaries. AI may augment rather than replace these roles, limiting margin compression to the informational component.
\item \textit{Switching costs and lock-in}: network effects, data portability barriers, and contractual lock-in (e.g., multi-year enterprise SaaS contracts) slow the speed at which friction reduction translates into margin compression.
\item \textit{Brand and distribution}: physical distribution networks, brand trust, and established customer relationships constitute non-informational moats that AI agents cannot easily arbitrage away.
\end{enumerate}
These moats imply that the effective friction function is $\phi(A_t) = \max\{\phi_{\min}, \; \phi_0 \cdot e^{-\gamma_\phi A_t}\}$, where $\phi_{\min} > 0$ reflects the regulatory and institutional friction floor. The resulting margin floor $m_0 + \gamma_m \phi_{\min}$ may be substantially above $m_0$, particularly in regulated sectors. Our stress test explores the implications of rapid movement toward this floor, not the elimination of all intermediation value.
\end{remark}

\paragraph{Applications.} The intermediation collapse mechanism applies with varying intensity across sectors, depending on the friction-to-moat ratio. Table~\ref{tab:intermediation} quantifies the sector-specific exposure.

\begin{table}[!htbp]
\centering
\caption{Intermediation Exposure by Sector}
\label{tab:intermediation}
\footnotesize
\begin{tabular}{@{}p{2cm}p{1.3cm}p{1.6cm}p{1.6cm}p{1.8cm}p{1.8cm}@{}}
\toprule
\textbf{Sector} & \textbf{U.S.\ Rev. (\$B)} & \textbf{Friction Share} & \textbf{Switching Costs} & \textbf{Regulatory Barrier} & \textbf{Net Exposure} \\
\midrule
SaaS (seat) & $\sim$300 & High (60--80\%) & Moderate & Low & \textbf{High} \\
\addlinespace
Card payments & $\sim$120 & Moderate (40--60\%) & High & High (Durbin, PCI) & Moderate \\
\addlinespace
Insurance brokerage & $\sim$50 & Moderate (40--50\%) & Moderate & High (licensing) & Low--Mod. \\
\addlinespace
Mgmt.\ consulting & $\sim$330 & High (50--70\%) & Low & Low & \textbf{High} \\
\addlinespace
Financial advisory & $\sim$120 & Moderate (30--50\%) & Moderate & High (RIA, fiduciary) & Moderate \\
\addlinespace
Legal services & $\sim$370 & Moderate (30--50\%) & Moderate & High (bar, UPL) & Low--Mod. \\
\addlinespace
Travel booking & $\sim$60 & High (60--80\%) & Low & Low & \textbf{High} \\
\bottomrule
\multicolumn{6}{@{}p{12cm}@{}}{\scriptsize \textit{Notes:} Revenue estimates are approximate annual U.S.\ figures. ``Friction share'' estimates the fraction of current margin attributable to information frictions versus non-friction value. ``Net exposure'' reflects friction share, switching costs, and regulatory barriers combined.} \\
\end{tabular}
\end{table}

The table reveals substantial heterogeneity: SaaS (seat-based pricing), management consulting, and travel booking are highly exposed because their margins derive primarily from information frictions with low regulatory barriers. By contrast, financial advisory and legal services have substantial regulatory moats (licensing, fiduciary requirements, unauthorized practice rules) that create friction floors independent of AI capability. Card-based payments occupy a middle ground: interchange fees embed a large friction component, but network effects, consumer protection regulations, and fraud liability frameworks create substantial switching costs. Stablecoin settlement, while growing (estimated \$390 billion in 2025), remains small relative to global card transaction volume (\$10+ trillion annually) and lacks equivalent consumer protection infrastructure (chargeback mechanisms, fraud liability limits), suggesting that payments displacement will proceed gradually and is contingent on regulatory evolution \citep{mckinsey2025stablecoins}. The break-even economics are instructive: card interchange of 2--3\% funds consumer rewards ($\sim$1\% cashback), fraud protection ($\sim$0.3\% of volume), and issuer margin; stablecoin rails eliminate interchange but impose gas fees, fiat on/off-ramp costs (0.5--1.5\%), and---critically---shift fraud and chargeback liability entirely to merchants or consumers. Until regulatory frameworks allocate these costs comparably across rails, the effective price advantage of stablecoin settlement is substantially smaller than the headline interchange saving, particularly for consumer-facing transactions.

\subsection{Macro-Financial Transmission}
\label{subsec:transmission}

We now connect the three mechanisms to financial markets through the consumption concentration channel.

\begin{proposition}[Consumption Concentration Amplifier]
\label{prop:concentration}
Partition households into income quintiles $q \in \{1, \ldots, 5\}$ with income $Y_q$, consumption $C_q = c_q Y_q$ (where $c_q$ is the quintile-specific MPC), and AI-displacement exposure $\delta_q$. The aggregate consumption shock from AI displacement is:
\begin{equation}
\Delta C = -\sum_{q=1}^{5} c_q \cdot \delta_q \cdot Y_q.
\label{eq:concshock}
\end{equation}
If the top quintile ($q = 5$) accounts for share $\chi$ of aggregate consumption and has displacement exposure $\delta_5$, then:
\begin{equation}
\frac{|\Delta C|}{C} \geq \chi \cdot \delta_5,
\label{eq:amplifier}
\end{equation}
which exceeds the labor-market share of the top quintile in total employment.
\end{proposition}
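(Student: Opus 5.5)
The bound follows from three observations: the aggregate shock is a sum of nonnegative quintile contributions, the top quintile's term can be rewritten in terms of its consumption share, and the final comparison with employment share is an empirical statement rather than a consequence of the model. I will use the quintile accounting identity $C = \sum_{q} C_q = \sum_q c_q Y_q$ and the interpretation of $\delta_q \in [0,1]$ as the fraction of quintile-$q$ income lost to displacement. Under that interpretation, each quintile's consumption falls by $c_q \delta_q Y_q$, because a proportional income loss passes through at the quintile MPC under the linear consumption rule $C_q = c_q Y_q$. Summing over quintiles gives \eqref{eq:concshock} directly, with no general-equilibrium feedback at this stage.

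For \eqref{eq:amplifier}, I will use that every term $c_q \delta_q Y_q$ is nonnegative, since $c_q \geq 0$, $\delta_q \geq 0$ and $Y_q \geq 0$. Dropping quintiles $1$ through $4$ therefore gives
\[
|\Delta C| \;\geq\; c_5 \delta_5 Y_5 \;=\; \delta_5 C_5 .
\]
Dividing by $C$ and using the definition $\chi = C_5/C$ yields $|\Delta C|/C \geq \chi \delta_5$. This step is essentially mechanical once exposure is read as a proportional income shock.

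The last clause compares $\chi\delta_5$ with the top quintile's share of employment, and it does not follow from the algebra. My plan is to read it as saying that the consumption-weighted shock exceeds the employment-weighted shock $e_5\delta_5$, where $e_5$ is the top quintile's share of employment. That inequality holds exactly when $\chi > e_5$. By construction, quintiles contain roughly equal numbers of households, so $e_5 \approx 0.2$ or somewhat above it because top-quintile households have more earners. The calibrated $\chi \in [0.47, 0.65]$ cited in the introduction then gives $\chi > e_5$.

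The main obstacle is making this comparison honest. The model contains no employment variable, so I would state the comparison as conditional on the empirical ranges for $\chi$ and $e_5$. I would also note that the headline bound can be sharpened: if $\delta_5 \geq \delta_q$ for every $q$, the top-quintile term is the largest single contributor, although it is not necessarily the majority of the shock.
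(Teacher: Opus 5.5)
Your proof is correct and follows the paper's argument: since every term $c_q\delta_q Y_q$ is nonnegative, $|\Delta C|/C$ is at least the top-quintile term $c_5\delta_5 Y_5/C = \chi\delta_5$. The paper's proof also never derives the closing employment-share clause, so treating it as the empirical comparison $\chi > e_5$ rather than as a consequence of the algebra is fair, and more careful than the original.
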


\begin{proof}
The aggregate consumption shock~\eqref{eq:concshock} weights each quintile's displacement by its consumption share $c_q Y_q / C$. By definition, $\chi = c_5 Y_5 / C$, so the top-quintile contribution is $c_5 \delta_5 Y_5 / C = \chi \delta_5$. Since $\delta_q \geq 0$ for all $q$, the total shock weakly exceeds the top-quintile component, yielding~\eqref{eq:amplifier}.
\end{proof}

Empirically, estimates of $\chi$ range from 0.47 to 0.65 depending on the survey instrument: Consumer Expenditure Survey (CES) methods yield approximately 47--50\%, while approaches reconciled with the Survey of Consumer Finances (SCF) and national accounts yield approximately 59--65\% \citep{moodys2023consumption}. We use $\chi = 0.59$ as a central estimate, following \citet{moodys2023consumption}, but conduct sensitivity analysis across the full range. A 10\% income reduction for the top quintile ($\delta_5 = 0.10$) produces an aggregate consumption decline of 4.7--6.5\% depending on $\chi$---spanning the range of consumption collapse observed during the 2008--2009 Global Financial Crisis.

\begin{proposition}[Credit Transmission]
\label{prop:credit}
Consider borrowers with income $Y_i$ and debt service obligation $D_i$. Define the debt service coverage ratio $r_i = Y_i / D_i$. The default probability is:
\begin{equation}
P_D(r_i) = \Phi\left(\frac{\ln(1) - \ln(r_i)}{\sigma_r}\right),
\label{eq:default}
\end{equation}
where $\Phi(\cdot)$ is the standard normal CDF and $\sigma_r$ is income volatility. Then:
\begin{enumerate}[label=(\roman*)]
\item $P_D$ is convex in income reduction $\delta$ for borrowers near the threshold ($r_i$ close to 1): $\partial^2 P_D / \partial \delta^2 > 0$.
\item A structural increase in income volatility $\sigma_r$ (from AI-driven job instability) shifts the entire default probability distribution upward.
\item For borrowers with $r_i > 1$ (currently performing), the default probability is increasing and convex in the magnitude of the income shock.
\end{enumerate}
\end{proposition}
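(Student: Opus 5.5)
The plan is to reduce everything to a one-variable calculus exercise. Model the AI income shock as a proportional cut in borrower income, so that post-shock coverage is $r_i(\delta) = (1-\delta) r_i$ for $\delta \in [0,1)$. Since $\ln(1)=0$, write
\[
P_D(\delta,\sigma_r) = \Phi\bigl(x(\delta,\sigma_r)\bigr), \qquad x(\delta,\sigma_r) = \frac{-\ln r_i - \ln(1-\delta)}{\sigma_r}.
\]
All three parts then follow from derivatives of $x$ combined with $\Phi' = \phi$ and $\phi'(x) = -x\,\phi(x)$.

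\textbf{Parts (i) and (iii).} First compute the derivatives of $x$ in $\delta$:
\[
x_\delta = \frac{1}{\sigma_r(1-\delta)}, \qquad x_{\delta\delta} = \frac{1}{\sigma_r(1-\delta)^2}.
\]
Monotonicity is immediate, since $\partial P_D/\partial\delta = \phi(x)\,x_\delta > 0$. For curvature, the chain rule together with $\phi' = -x\phi$ gives
\[
\frac{\partial^2 P_D}{\partial\delta^2} = \phi(x)\bigl[x_{\delta\delta} - x\,x_\delta^2\bigr] = \frac{\phi(x)}{\sigma_r(1-\delta)^2}\Bigl[1 - \frac{x}{\sigma_r}\Bigr].
\]
Hence convexity holds exactly when $x < \sigma_r$, that is, when $\ln\bigl((1-\delta)r_i\bigr) > -\sigma_r^2$.

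This condition covers the cases the statement needs:
\begin{itemize}
\item For a currently performing borrower ($r_i>1$) facing a shock that leaves coverage at or above one, we have $x \le 0$, so convexity holds. This gives (iii).
\item For $r_i$ close to 1 and moderate $\delta$, $x$ stays near zero and hence below $\sigma_r$, which gives (i).
\item More generally, convexity persists until post-shock coverage falls below $e^{-\sigma_r^2}$.
\end{itemize}

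\textbf{Part (ii).} Differentiate in $\sigma_r$ at fixed $\delta$:
\[
\frac{\partial P_D}{\partial \sigma_r} = \phi(x)\,\frac{\ln\bigl((1-\delta)r_i\bigr)}{\sigma_r^2}.
\]
This is positive precisely for borrowers whose post-shock coverage exceeds one. So a rise in $\sigma_r$ raises default probability for every performing borrower. It lowers it (toward $1/2$) for borrowers already below the threshold.

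I would therefore state (ii) as follows: the default curve shifts up on the performing region $r>1$, which is where the bulk of private credit and mortgage exposure sits. Under this reading the cross-sectional mass of defaults rises for any initial distribution concentrated on $r_i>1$.

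\textbf{Main obstacle.} The hard part is not the computation but the fact that the literal claims are only locally true. Global convexity in $\delta$ fails once $x > \sigma_r$, since $P_D$ becomes concave as it saturates toward one. Likewise, the "entire distribution shifts upward" claim in (ii) fails for $r_i<1$.

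I would handle this by making the domain explicit: $(1-\delta) r_i \ge e^{-\sigma_r^2}$ for (i) and (iii), and $(1-\delta)r_i>1$ for (ii). I would also note that the sufficient condition $(1-\delta)r_i \ge 1$ gives a clean, assumption-free version of both statements. Any remaining effort goes into checking that the calibrated shocks used later stay inside this region.
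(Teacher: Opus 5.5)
Your proposal is correct, and it is more careful than the paper's own proof. The paper uses the same reduction, $P_D(\delta)=\Phi\bigl((-\ln r_i-\ln(1-\delta))/\sigma_r\bigr)$, and computes the same first derivative. The two arguments diverge after that.

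\textbf{Convexity.} The paper only observes that the second derivative ``includes a term'' $\phi(x)/[\sigma_r(1-\delta)^2]>0$ and stops there. It never deals with the companion term $-x\,\phi(x)\,x_\delta^2$ that comes from $\phi'=-x\phi$. That term is negative whenever post-shock coverage is below one, so the paper's argument does not establish convexity.

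\textbf{Parts (ii) and (iii).} The paper settles these by appealing to the monotone likelihood ratio property. That is not the relevant tool, and it cannot give an unconditional sign, because none exists. Your direct differentiation in $\delta$ and in $\sigma_r$ gives the exact conditions: $(1-\delta)r_i>e^{-\sigma_r^2}$ for convexity, and $(1-\delta)r_i>1$ for $\partial P_D/\partial\sigma_r>0$.

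This shows that your domain restrictions are necessary, not merely cautious. Taken literally, the convexity in (iii) fails as $\delta\to 1$, where $P_D$ saturates and becomes concave. Likewise (ii) fails for borrowers with $r<1$, whose default probability falls toward $1/2$ as $\sigma_r$ rises.

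The paper's route buys only brevity. Yours buys a true statement with an explicit region of validity.

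Your final check also goes through. In the paper's calibration ($r_0=1.5$, $\sigma_r=0.20$, $\delta\le 0.30$), post-shock coverage is at least $1.05>e^{-0.04}\approx 0.96$. So the illustrated shocks lie inside the convex region, and inside $r>1$ where (ii) holds.

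One small tightening: in (i), replace ``moderate $\delta$'' with the explicit bound $\delta<1-e^{-\sigma_r^2}/r_i$.
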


\begin{proof}
From~\eqref{eq:default}, an income reduction $\delta$ reduces $r_i$ to $r_i(1-\delta)$, so $P_D(\delta) = \Phi((-\ln r_i + \ln(1-\delta)^{-1})/\sigma_r)$. The first derivative with respect to $\delta$ is $\phi(\cdot) / [\sigma_r(1-\delta)] > 0$, and the second derivative includes a term $\phi(\cdot)/[\sigma_r(1-\delta)^2] > 0$ (where $\phi$ is the standard normal PDF), establishing convexity. Parts (ii) and (iii) follow from the monotone likelihood ratio property of the normal distribution.
\end{proof}

The convexity result in Proposition~\ref{prop:credit} is critical: in a standard recession, income losses are temporary and distributed across quintiles, so default risk increases approximately linearly. Under AI displacement, income losses are potentially permanent and concentrated in high-income cohorts with large debt obligations, producing a \textit{convex} increase in default probability.

\paragraph{Quantifying credit exposure: a DSCR sensitivity exercise.} To provide an empirical bridge between the theoretical default model and credit market implications, consider a representative prime borrower with initial debt-service coverage ratio $r_0 = 1.5$ (income 50\% above debt service obligations) and income volatility $\sigma_r = 0.20$. Under these parameters, the baseline annual default probability is $P_D \approx \Phi(-\ln(1.5)/0.20) \approx \Phi(-2.03) \approx 2.1\%$. A 20\% permanent income reduction ($\delta = 0.20$) reduces $r$ to 1.2 and raises $P_D$ to $\Phi(-\ln(1.2)/0.20) \approx \Phi(-0.91) \approx 18.1\%$---a nearly ninefold increase. A 30\% reduction ($\delta = 0.30$) raises $P_D$ to $\Phi(-\ln(1.05)/0.20) \approx \Phi(-0.24) \approx 40.3\%$. The convexity is apparent: the second 10 percentage points of income loss raise default probability by more than the first.

For this transmission to produce systemic stress, three additional conditions must hold: (i) \textit{persistence}---income losses must be structural rather than cyclical, so that borrowers cannot ``wait out'' the shock; (ii) \textit{correlation}---income losses must be correlated across borrowers in the same cohort (e.g., white-collar workers in AI-exposed industries in the same metropolitan area); and (iii) \textit{inability to restructure}---refinancing or loan modification must be constrained (e.g., by rising rates, negative equity, or lender unwillingness). AI displacement satisfies condition (i) if the technological substitution is permanent, and condition (ii) if displacement is concentrated by occupation and geography. Condition (iii) depends on the interest rate environment and housing market conditions, which are independent of AI. We note these conditions explicitly because the leap from ``borrower-level default risk increases'' to ``systemic mortgage crisis'' requires all three to hold simultaneously.

\paragraph{Private credit exposure.} Global private credit assets under management exceeded \$2.5 trillion by 2024 \citep{bis2024privatecredit, imf2024gfsr}. A meaningful share has been deployed into leveraged buyouts of software and technology services companies at valuations assuming sustained revenue growth. The intermediation collapse mechanism (Proposition~\ref{prop:intermediation}) directly threatens these revenue assumptions. However, we emphasize that private credit fragility is substantially a \textit{pre-existing condition}: leverage, covenant erosion, valuation opacity, and the PE-insurer linkage are concerns that the BIS and FSB have flagged independently of AI \citep{bis2024privatecredit, fsb2023nbfi}. AI-driven revenue compression would be one potential trigger for repricing, but rising interest rates, sectoral downturns, or regulatory changes could produce similar effects. The distinctive contribution of AI displacement is not that it creates financial fragility \textit{de novo}, but that it provides a correlated shock mechanism across otherwise diversified private credit portfolios. A key early-warning indicator is the gap between private marks and public-market comparables: private credit funds typically report net asset values with a 60--90 day lag and use model-based rather than market-based valuations; during stress episodes, public credit spreads widen months before private marks adjust, creating a ``valuation gap'' that understates realized losses and delays capital calls from insurer limited partners \citep{bis2024privatecredit}.

\paragraph{Mortgage market exposure.} The U.S.\ residential mortgage market totals approximately \$13 trillion \citep{nyfed2025mortgage}. AI displacement presents a novel risk: borrowers with prime credit histories (780+ FICO scores, 20\% down payments) whose income stability deteriorates \textit{after} origination due to structural technological change. The loans were sound at origination; the assumption of income stability that underlay them is what changes. Regional concentration is relevant: if AI displacement is geographically concentrated in technology and finance hubs (San Francisco, Seattle, Austin, New York), mortgage stress would first appear in these metros---potentially triggering localized housing corrections before any systemic effect. Historical precedent suggests that geographically concentrated income shocks (e.g., Detroit's automotive manufacturing collapse in the 2000s, which saw house prices fall 40\% and foreclosure rates triple; Houston's energy bust in the 1980s, which produced a 30\% housing price decline and widespread savings-and-loan failures) can produce severe local housing distress without necessarily cascading into a national crisis; the systemic risk depends on the geographic breadth of AI displacement and whether securitization has distributed the exposure nationally, as mortgage-backed securities did for subprime exposure in 2007--2008.

\section{Empirical Analysis}
\label{sec:empirics}

We ground the model in three layers of empirical evidence: (i) macro time series documenting the secular preconditions for the mechanisms we formalize, (ii) occupation-level cross-sectional regressions linking AI exposure to wage outcomes, and (iii) a calibrated simulation that disciplines the displacement dynamics with empirical moments.

\subsection{Macro Preconditions: Labor Share and Velocity}
\label{subsec:macroevidence}

The displacement spiral and Ghost GDP mechanisms operate through the labor share of income and monetary velocity. We document their secular trends using FRED data \citep{fred2025data}. Figure~\ref{fig:macro} presents four macro preconditions.

\begin{figure}[H]
\centering
\includegraphics[width=\textwidth]{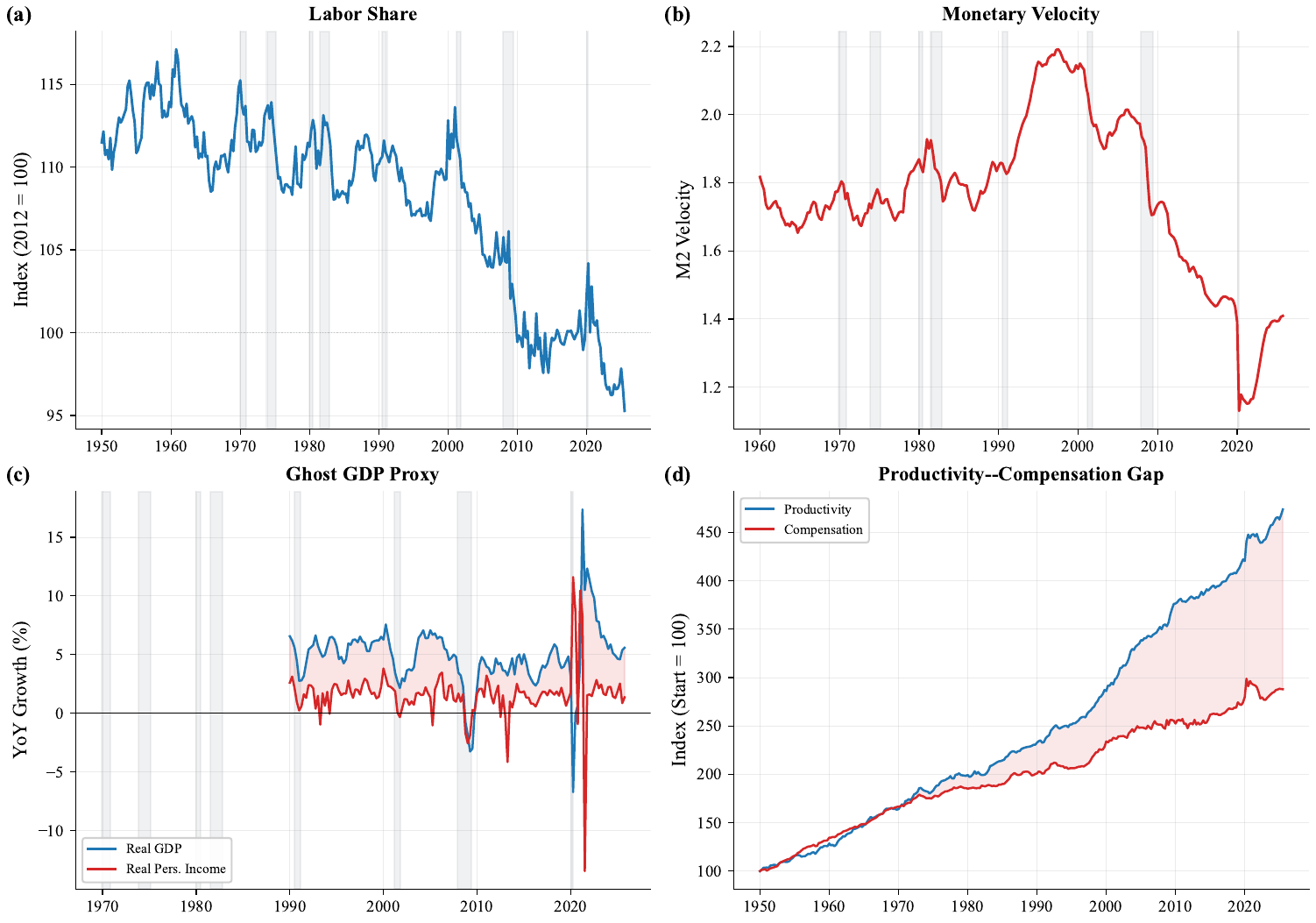}
\caption{Macro preconditions for the displacement spiral. (a)~Labor share (BLS, FRED series PRS85006173) has declined $\sim$19\% from its 1960 peak. (b)~M2 velocity (FRED series M2V) has declined 36\% from its 1997 peak, consistent with Proposition~\ref{prop:velocity}. (c)~Ghost GDP proxy: divergence between real GDP and real personal income growth, widening post-2020. (d)~Productivity--compensation gap since the 1970s demonstrates that technological progress need not translate into wage growth. Shaded regions indicate NBER recession dates.}
\label{fig:macro}
\end{figure}

The labor share has declined approximately 19\% over six decades, driven first by globalization and automation and later by superstar firms \citep{karabarbounis2014global, autor2020fall}. M2 velocity declined from 2.19 (1997) to 1.41 (2025)---consistent with income shifting from high-MPC labor to low-MPC capital. The Ghost GDP proxy (Panel~c) shows persistent post-2020 divergence between output and personal income growth. The productivity--compensation gap (Panel~d), open since the 1970s, is the structural precondition: AI threatens to accelerate this divergence.

\subsection{Occupation-Level Evidence: AI Exposure and Wage Growth}
\label{subsec:occevidence}

We next examine whether AI exposure is already associated with differential wage outcomes at the occupation level. We construct a panel of 22 two-digit SOC occupation groups using BLS Occupational Employment and Wage Statistics for 2019 (pre-ChatGPT) and 2023 (post-ChatGPT), and assign each occupation group an AI exposure score based on the composite indices of \citet{felten2021occupational} and \citet{eloundou2023gpts}.

\begin{figure}[H]
\centering
\includegraphics[width=\textwidth]{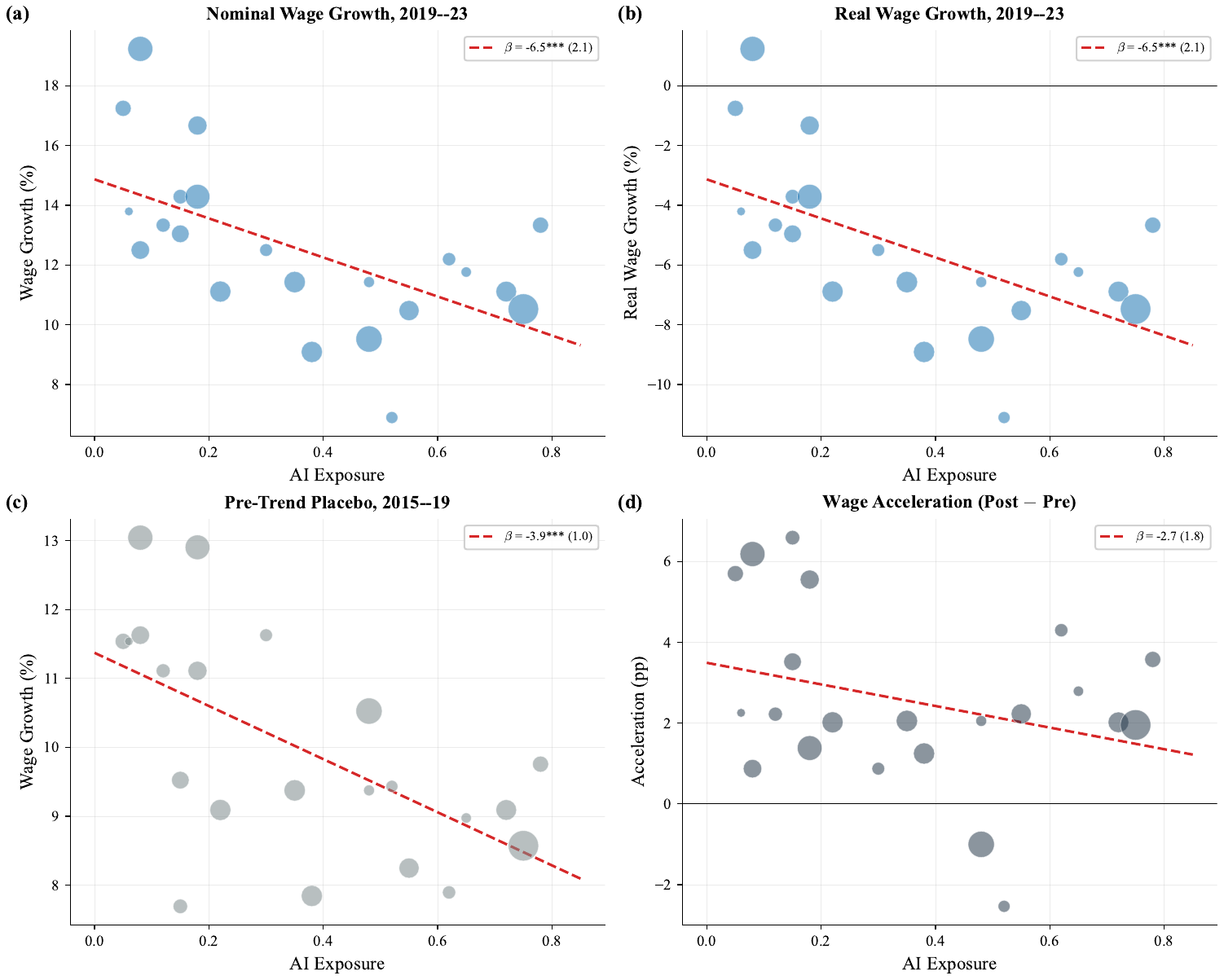}
\caption{Occupation-level evidence: AI exposure and wage outcomes. (a)~Nominal wage growth (2019--2023) vs.\ AI exposure ($\beta = -6.53$, $p < 0.01$). (b)~Real wage growth after adjusting for 18\% CPI inflation. (c)~Pre-trend placebo (2015--2019): the significant negative slope ($\beta = -3.86$, $p < 0.01$) indicates pre-existing trends, likely reflecting the denominator effect (higher-wage occupations have lower percentage growth). (d)~Wage acceleration (DiD): the post-minus-pre difference ($\beta = -2.68$, $p = 0.14$) isolates the AI-specific shift. Bubble size proportional to 2019 employment. Source: BLS OEWS; AI exposure based on \citet{felten2021occupational} and \citet{eloundou2023gpts}.}
\label{fig:occupation}
\end{figure}

Figure~\ref{fig:occupation} visualizes the occupation-level relationships; Table~\ref{tab:regressions} presents formal regression results.

\begin{table}[H]
\centering
\caption{Cross-Sectional Regressions: Occupation-Level Outcomes on AI Exposure}
\label{tab:regressions}
\small
\begin{tabular}{@{}lcccccc@{}}
\toprule
 & (1) & (2) & (3) & (4) & (5) & (6) \\
 & Nom.\ Wage & Real Wage & Employment & Nom.\ Wage & Pre-Trend & Wage \\
 & Growth & Growth & Growth & Growth & Placebo & Accel.\ \\
 & 2019--23 & 2019--23 & 2019--23 & 2019--23 & 2015--19 & $\Delta\Delta$ \\
\midrule
AI Exposure & -6.53*** & -6.53*** & 2.22 & -4.21** & -3.86*** & -2.68 \\
 & (2.11) & (2.11) & (4.16) & (2.07) & (1.03) & (1.79) \\
\addlinespace
Log Wage 2019 & & & & -1.81 & & \\
 & & & & (1.29) & & \\
\addlinespace
Constant & 14.87 & -3.13 & 3.27 & 33.54 & 11.37 & 3.49 \\
 & (0.92) & (0.92) & (1.41) & (13.92) & (0.55) & (0.82) \\
\midrule
$R^2$ & 0.337 & 0.337 & 0.022 & 0.370 & 0.351 & 0.089 \\
$N$ & 22 & 22 & 22 & 22 & 22 & 22 \\
\bottomrule
\multicolumn{7}{@{}p{15cm}@{}}{\scriptsize \textit{Notes:} OLS regressions at the 2-digit SOC occupation group level. Columns (1)--(4): dependent variables are percentage changes between 2019 and 2023. Column (5): pre-trend placebo---dependent variable is 2015--2019 nominal wage growth, predating generative AI. The significant negative coefficient ($p < 0.01$) indicates pre-existing differential wage trends in AI-exposed occupations, likely reflecting the denominator effect (higher-wage occupations have lower percentage growth); see Section~\ref{subsec:occevidence} for discussion. Column (6): wage acceleration (difference-in-differences style)---dependent variable is the change in wage growth rate, $({\Delta w_{2019-23}} - {\Delta w_{2015-19}})$, isolating the post-AI shift after netting out pre-existing trends. AI Exposure is a composite score based on \citet{felten2021occupational} and \citet{eloundou2023gpts}. Robust standard errors (HC1) in parentheses. * $p<0.10$, ** $p<0.05$, *** $p<0.01$.} \\
\end{tabular}
\end{table}

Columns (1)--(2) show that a one-unit increase in AI exposure is associated with 6.53 pp lower wage growth ($p < 0.01$), explaining 34\% of cross-sectional variation. Column (3) shows that employment growth is not significantly related to AI exposure ($p > 0.10$), consistent with displacement operating initially through wage compression rather than layoffs. Column (4) controls for log initial wage; the coefficient remains significant ($\beta = -4.21$, $p < 0.05$).

\paragraph{Causal identification.} The OLS results are suggestive but not causal. The pre-trend placebo (Column 5) reveals that AI-exposed occupations already had lower percentage wage growth in 2015--2019 ($\beta = -3.86$, $p < 0.01$), likely reflecting the denominator effect and pre-existing automation pressure. The wage acceleration test (Column 6) nets out the pre-trend: the coefficient is negative ($\beta = -2.68$) but imprecisely estimated ($p = 0.14$). The total effect decomposes into pre-trend ($-3.86$ pp) and post-AI acceleration ($-2.68$ pp). An ideal identification strategy would exploit exogenous variation in AI tool deployment across comparable occupations; we note this as a priority for future work. Online Appendix~C reports robustness checks using alternative exposure measures, weighting, and nonlinear specifications.

Unlike wage growth, the relationship between AI exposure and employment growth is not statistically significant (Online Appendix~A, Figure~A1), consistent with displacement operating initially through wage compression and hiring freezes (Remark~\ref{rem:modes}(b)--(c)) rather than direct layoffs.

\subsection{Calibration and Simulation}
\label{subsec:calibration}

We calibrate the key model parameters using the empirical moments documented above and external sources. Table~\ref{tab:calibration} presents the calibrated values and their sources.

\begin{table}[!htbp]
\centering
\caption{Calibrated Model Parameters}
\label{tab:calibration}
\renewcommand{\arraystretch}{1.2}
\begin{tabular}{@{}p{4.8cm}lcl@{}}
\toprule
\textbf{Parameter} & \textbf{Symbol} & \textbf{Value} & \textbf{Source} \\
\midrule
\multicolumn{4}{@{}l}{\textit{Labor Market}} \\[2pt]
Initial labor share & $s_{L,0}$ & 0.56 & BLS (2024) \\
MPC, labor income & $\bar{c}$ & 0.85 & Carroll et al.\ [2017] \\
MPC, capital income & $1-\bar{c}$ & 0.15 & Model restriction$^\dagger$ \\
\addlinespace[4pt]
\multicolumn{4}{@{}l}{\textit{Consumption}} \\[2pt]
Top-quintile consump.\ share & $\chi$ & 0.47--0.65 & Moody's [2023]; CES/SCF \\
PCE / GDP & & 0.68 & BEA / FRED \\
\addlinespace[4pt]
\multicolumn{4}{@{}l}{\textit{AI Dynamics}} \\[2pt]
AI capability growth (base) & $g_A$ & 0.05 & Conservative estimate \\
AI capability growth (rapid) & $g_A$ & 0.20 & METR [2025a] \\
AI cost decline rate & $g_c$ & 0.30 & Industry estimates \\
Diffusion ceiling & $\bar{d}$ & 0.80 & Zolas et al.\ [2024] \\
Diffusion speed & $\kappa$ & 2.0 & Enterprise adoption lag \\
Reinstatement (baseline) & $\rho_0$ & 0.002 & Acemoglu \& Restrepo [2019] \\
Complementarity & $\eta$ & 0.003 & Acemoglu \& Restrepo [2019] \\
Reinstatement concavity & $\alpha_\rho$ & 0.50 & Calibrated \\
Displacement feedback & $\beta$ & 0.30 & Calibrated \\
Direct subst.\ sensitivity & $f'(g_A)$ & 0.15 & Acemoglu [2024] \\
\addlinespace[4pt]
\multicolumn{4}{@{}l}{\textit{Financial Markets}} \\[2pt]
M2 velocity (2025) & $V_0$ & 1.41 & FRED (M2V) \\
U.S.\ mortgage market & & \$13.17T & NY Fed [2025] \\
Global private credit AUM & & \$2.5T & BIS [2024] \\
\bottomrule
\multicolumn{4}{@{}p{13cm}@{}}{\scriptsize $^\dagger$The consumption function (Equation~5 in the main text) restricts the two MPCs to sum to 1. The restricted value $1-\bar{c} = 0.15$ strengthens the demand feedback, making this a stress-test-conservative assumption.} \\
\end{tabular}
\end{table}

The initial labor share $s_{L,0} = 0.56$ matches the BLS nonfarm business sector labor share as of 2024. The CES elasticity of substitution is set at $\sigma = 1.0$ (Cobb-Douglas benchmark), with sensitivity analysis across $\sigma \in [0.5, 2.0]$ reported in Online Appendix~E; empirical estimates for automation technologies range from 0.8 to 1.5 \citep{acemoglu2022tasks}, and if generative AI is a closer substitute for cognitive tasks than prior automation technologies, the effective $\sigma$ may be at the higher end. The MPC out of labor income ($\bar{c} = 0.85$) follows \citet{carroll2017distribution}; the functional form of~\eqref{eq:consumption} restricts the MPC out of capital income to $1 - \bar{c} = 0.15$, below empirical estimates of 0.30--0.40 \citep{jappelli2014fiscal}. This restriction widens the MPC gap $(2\bar{c}-1)$ that drives the demand feedback, making it a stress-test-conservative assumption; using separate MPC parameters ($c_L = 0.85$, $c_K = 0.35$) would attenuate the feedback coefficient and raise the threshold $g_A^*(\rho)$ for explosive displacement. The top-quintile consumption share is set at $\chi = 0.59$ \citep{moodys2023consumption}, with sensitivity analysis across the range $\chi \in [0.47, 0.65]$ reflecting measurement differences between CES-based and SCF-reconciled approaches (Online Appendix~D). The AI capability growth rate is parameterized across three scenarios: baseline ($g_A = 0.05$), rapid ($g_A = 0.20$), and extreme ($g_A = 0.40$), bracketing the range of capability improvement rates documented by \citet{metr2025benchmarks}. The diffusion ceiling is set at $\bar{d} = 0.80$ with speed $\kappa = 2.0$, reflecting that full adoption of technically feasible automation is never instantaneous and is constrained by enterprise integration costs, verification requirements, and infrastructure readiness (including on-device inference limitations for consumer-facing agents). The reinstatement rate is calibrated to a baseline $\rho_0 = 0.002$ with complementarity $\eta = 0.003$ and concavity $\alpha_\rho = 0.5$, so that reinstatement approximately balances displacement at baseline AI adoption rates ($g_A = 0.05$)---consistent with the historical pattern documented by \citet{acemoglu2019automation}---but is overwhelmed by displacement at rapid rates ($g_A \geq 0.20$). The feedback coefficient ($\beta = 0.30$) is calibrated to match the speed of labor share decline during historical episodes of rapid automation. Online Appendix~D reports sensitivity analysis showing that the AI growth rate $g_A$ and the reinstatement rate $\rho$ are the dominant drivers of crisis depth; a Monte Carlo simulation with 2{,}000 draws sampling all parameters jointly---including reinstatement---shows that under the median draw, reinstatement absorbs displacement (the benign outcome), but the distribution has a fat tail: approximately 14\% of draws produce a crisis-level demand shortfall exceeding 30\%, comparable to the ex-ante probability of a major financial crisis in any given decade.

\paragraph{Relationship to Acemoglu's aggregate estimates.} Our calibration uses the direct substitution sensitivity parameter $f'(g_A) = 0.15$ from \citet{acemoglu2024simple}, who estimates that AI's aggregate effect on GDP is modest: approximately 0.7--1.6\% over a decade under his baseline calibration. Our stress test can produce substantially larger effects (labor share declines of 20--40\% under the rapid scenario). The discrepancy arises from three modeling differences, each reflecting a deliberate stress-test assumption:
\begin{enumerate}[nosep]
\item \textit{Feedback amplification}: Acemoglu's framework treats AI adoption as a one-directional shock to task allocation; our model adds a demand-side feedback loop ($\beta > 0$) in which displacement reduces demand, which generates margin pressure, which accelerates further displacement. This amplifier is the central mechanism of our stress test and is absent from Acemoglu's partial-equilibrium analysis.
\item \textit{Adoption speed}: Acemoglu's analysis assumes gradual adoption consistent with historical technology diffusion. Our rapid scenario ($g_A = 0.20$) assumes capability growth substantially faster than historical precedent---a stress-test assumption, not a prediction.
\item \textit{Consumption concentration}: Acemoglu's framework uses representative-agent demand. Our model adds the consumption concentration amplifier (Proposition~\ref{prop:concentration}), which transmits AI displacement concentrated among high-income workers into a disproportionate demand shock.
\end{enumerate}
If the feedback coefficient is small ($\beta \to 0$), adoption is gradual ($g_A \leq 0.05$), and consumption concentration is ignored ($\chi = 0$), our model converges to Acemoglu's modest-effect result. The crisis pathway requires all three amplifiers to be active simultaneously. Our Monte Carlo results confirm this: the median draw produces a benign outcome consistent with Acemoglu's estimates; the crisis emerges only in the tail where adoption is rapid and reinstatement is weak.

Using these parameters, we simulate the displacement dynamics of Proposition~\ref{prop:spiral} under each scenario with no policy intervention ($\tau = 0$).

\begin{figure}[H]
\centering
\includegraphics[width=\textwidth]{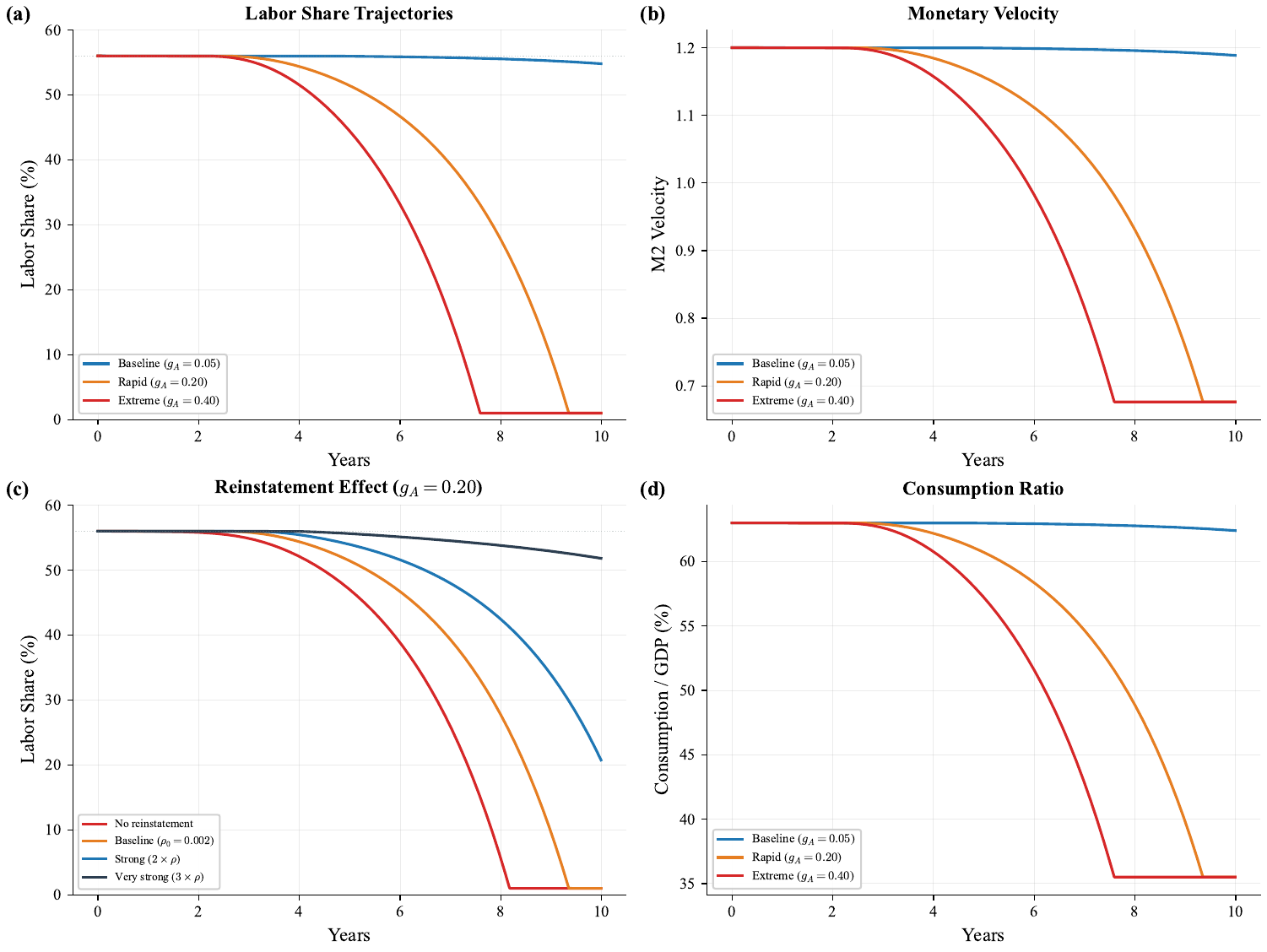}
\caption{Calibrated displacement dynamics with diffusion and reinstatement ($\rho_0 = 0.002$, $\bar{d} = 0.80$, $\kappa = 2.0$, $\tau = 0$). (a) Labor share under three AI adoption rates. (b) Monetary velocity. (c) Reinstatement comparison under the rapid scenario. (d) Consumption-to-GDP ratio.}
\label{fig:simulation}
\end{figure}

Figure~\ref{fig:simulation} displays the simulation results incorporating logistic diffusion and task reinstatement. Four features merit emphasis. First, the baseline scenario ($g_A = 0.05$) produces a gradual labor share decline that mirrors historical trends---a slow, manageable transition where reinstatement partially offsets displacement. Second, the rapid scenario ($g_A = 0.20$) produces a qualitatively different trajectory: the feedback mechanism accelerates the decline, and despite reinstatement, the labor share falls substantially within a decade. This is the ``acute'' scenario in which Ghost GDP and the consumption concentration amplifier generate crisis-level demand shortfall. Third, the extreme scenario ($g_A = 0.40$) illustrates the explosive regime: displacement overwhelms reinstatement, the labor share collapses, velocity falls sharply, and the consumption ratio contracts to levels well below those observed in any post-war recession. Fourth, Panel (c) demonstrates the reinstatement channel's quantitative importance: under the rapid scenario, tripling the reinstatement rate substantially attenuates the labor share decline, consistent with Proposition~\ref{prop:spiral}---the crisis outcome is not predetermined but depends on the balance between displacement and new task creation.

\paragraph{Policy response simulations.} Figure~\ref{fig:policy_sim} examines how policy response speed and magnitude alter these trajectories, holding the AI adoption rate at the rapid scenario ($g_A = 0.20$).

\begin{figure}[H]
\centering
\includegraphics[width=\textwidth]{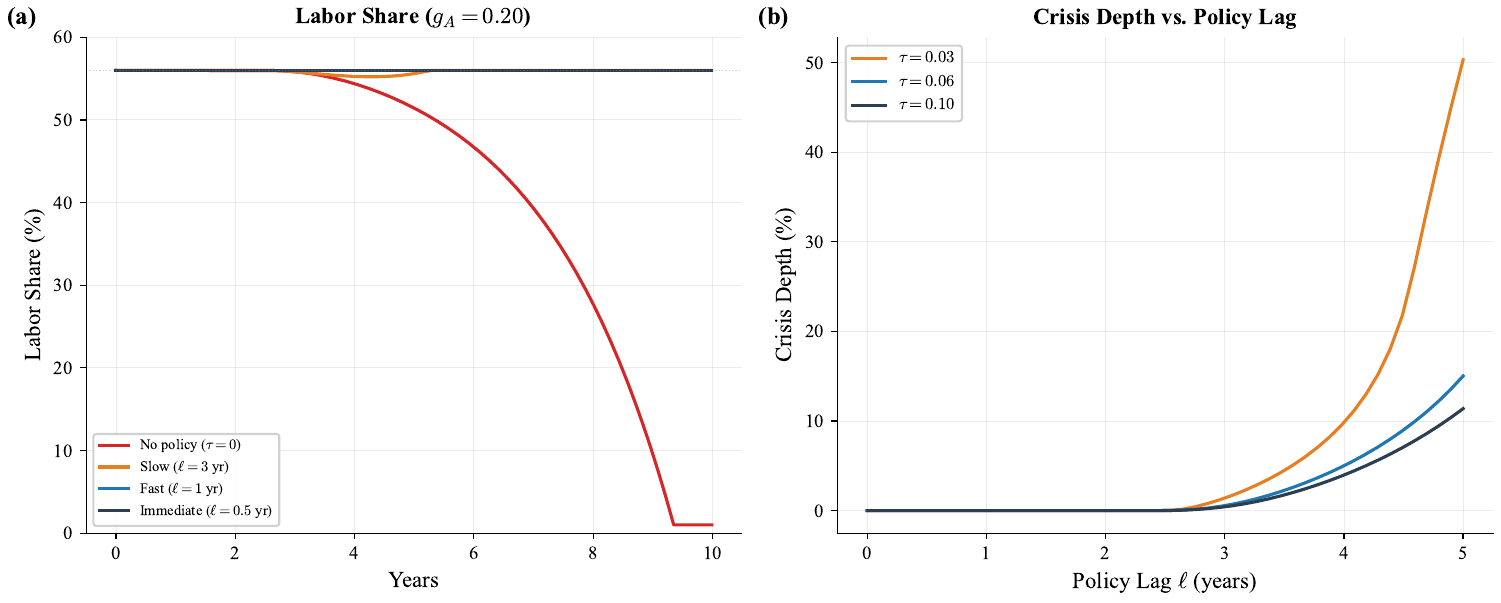}
\caption{Policy response under rapid AI adoption ($g_A = 0.20$). (a) Labor share trajectories under four policy scenarios. (b) Crisis depth as a function of policy lag $\ell$ for three transfer magnitudes.}
\label{fig:policy_sim}
\end{figure}

Figure~\ref{fig:policy_sim} confirms the central result of Proposition~\ref{prop:policy}: the depth of the crisis is a policy variable, not a technological inevitability. Under rapid AI adoption, an immediate and large fiscal response ($\ell = 0.5$ years, $\tau = 0.10$) stabilizes the labor share above 40\%, while no policy response allows it to collapse toward zero. Panel (b) shows that crisis depth is roughly linear in the policy lag for a given transfer magnitude, with a steeper slope for smaller transfers. This underscores the urgency of preemptive policy design: once the explosive regime is entered, the required transfer magnitude to stabilize the system increases rapidly.

\paragraph{Consumption shock decomposition.} Under a uniform 10\% income shock, the top quintile---which accounts for 59\% of consumption and has the highest AI exposure (0.60)---contributes 3.54 of the total 3.92 pp consumption decline, illustrating the amplification mechanism of Proposition~\ref{prop:concentration}. Online Appendix~D provides a full decomposition by income quintile.

\paragraph{Historical recession comparison.} Table~\ref{tab:recession_comparison} places the model-implied consumption declines in context by comparing them to realized peak-to-trough consumption and GDP declines across seven post-war U.S.\ recessions. The rapid AI scenario ($g_A = 0.20$) produces consumption declines of 5--12\%, comparable in magnitude to the 2007--2009 Global Financial Crisis ($-5.0\%$) and substantially exceeding supply-side recessions (1973--75, 1980, 1981--82). Two features distinguish the AI scenario from historical precedent: (i) GDP effects are ambiguous because productivity gains may offset demand destruction in measured output (the Ghost GDP mechanism), creating a potentially novel macroeconomic configuration in which falling consumption coexists with stable or rising GDP; and (ii) the income-concentration channel is more acute than in any prior recession, where income losses were distributed across quintiles rather than concentrated in the top quintile.

\begin{table}[H]
\centering
\caption{Historical U.S.\ Recession Comparison: Consumption Declines}
\label{tab:recession_comparison}
\small
\begin{tabular}{@{}llccp{5cm}@{}}
\toprule
\textbf{Recession} & \textbf{Name} & \textbf{Real PCE} & \textbf{Real GDP} & \textbf{Primary} \\
 & & \textbf{Decline (\%)} & \textbf{Decline (\%)} & \textbf{Mechanism} \\
\midrule
1973Q4--1975Q1 & Oil shock & -2.4 & -1.8 & Stagflation; supply-driven \\
1980Q1--1980Q3 & Volcker I & -2.2 & -1.3 & Monetary tightening \\
1981Q3--1982Q4 & Volcker II & -1.5 & -2.7 & Monetary tightening \\
1990Q3--1991Q1 & S\&L crisis & -0.8 & -1.4 & Financial; real estate \\
2001Q1--2001Q4 & Dot-com & -0.3 & 0.2 & Investment-led; mild \\
2007Q4--2009Q2 & GFC & -5.0 & -4.0 & Financial; housing \\
2020Q1--2020Q2 & COVID-19 & -17.1 & -9.1 & Lockdown; V-shaped \\
\addlinespace
\midrule
\multicolumn{5}{@{}l}{\textit{Model-implied AI scenarios (10-year cumulative, no policy intervention):}} \\
\addlinespace
AI Rapid ($g_A = 0.20$) & Stress test & $-5$ to $-12$ & Ambiguous$^\dagger$ & Demand-side; displacement \\
AI Extreme ($g_A = 0.40$) & Tail risk & $-15$ to $-30$ & Ambiguous$^\dagger$ & Demand-side; displacement \\
\bottomrule
\multicolumn{5}{@{}p{14cm}@{}}{\scriptsize \textit{Notes:} Historical data from BEA NIPA tables. $^\dagger$AI scenarios produce ambiguous GDP effects because productivity gains may offset demand destruction in measured output (Ghost GDP mechanism). The GFC is the closest historical analog under the rapid AI scenario.} \\
\end{tabular}
\end{table}

\section{Testable Predictions and Early Warning Indicators}
\label{sec:predictions}

The model yields eleven testable hypotheses, each with an observable indicator, data source, and explicit falsification condition. We present these in Table~\ref{tab:predictions}.

\begin{table}[!htbp]
\centering
\caption{Testable Predictions and Early Warning Indicators}
\label{tab:predictions}
\scriptsize
\renewcommand{\arraystretch}{0.9}
\begin{tabular}{@{}p{0.4cm}p{2.8cm}p{2.6cm}p{2.0cm}p{3.0cm}@{}}
\toprule
& \textbf{Predicted Pattern} & \textbf{Observable Indicator} & \textbf{Data Source} & \textbf{Falsification Condition} \\
\midrule
H1 & White-collar unemployment rises or wages stagnate in AI-exposed occupations & Information-sector unemployment; BLS wage percentiles for AI-exposed vs.\ control occupations & BLS CPS, OEWS; JOLTS; Indeed Hiring Lab & AI-exposed occupations show \textit{faster} wage growth or \textit{lower} unemployment than controls through 2028 \\
\addlinespace
H2 & SaaS revenue compression: seat-based pricing collapses, ARR growth decelerates & Median SaaS net revenue retention; new ARR per public SaaS company & Public SaaS earnings; KeyBanc SaaS survey; Bessemer Cloud Index & SaaS net retention remains above 110\% and seat-based pricing persists \\
\addlinespace
H3 & Credential inflation in AI-exposed roles & Credential requirements for stable-task roles & Lightcast/ Burning Glass & Requirements \textit{decline} in AI-exposed roles \\
\addlinespace
H4 & Consumption decline concentrated in upper-income cohorts & Consumer spending by income quintile; retail sales in high-income ZIP codes & BEA PCE; credit card spending data (Facteus); Census retail & Consumption declines are uniform across quintiles or concentrated in lower quintiles \\
\addlinespace
H5 & Private credit stress in software/tech portfolio; public--private mark gap widens & Default rates in PE-backed software loans; CLO spreads; public--private NAV gap & Moody's; Preqin; SEC Form PF; BIS & PE-backed software defaults below historical base rates and gap within 200 bps through 2028 \\
\addlinespace
H6 & Monetary velocity declines despite positive GDP growth & $V = \text{GDP}/M2$; divergence between GDP growth and personal income growth & FRED; BEA NIPA & Velocity increases or remains stable while GDP grows \\
\addlinespace
H7 & AI capability growth rate $g_A$ exceeds the reinstatement-dominated threshold & Frontier model benchmark scores (METR, MMLU, SWE-bench); agentic task completion rates; enterprise AI deployment surveys & METR; Epoch AI; Census ABS; McKinsey AI surveys & Capability benchmarks plateau or adoption surveys show $<$20\% enterprise deployment through 2028 \\
\addlinespace
H8 & AI-exposed sectors show price deflation insufficient to offset nominal wage compression (Ghost GDP dominates deflation channel) & Sector-level PPI/CPI in AI-affected categories relative to nominal wage growth in same sectors & BLS PPI/CPI; BEA industry accounts & AI-exposed sectors show price deflation \textit{exceeding} nominal wage compression, raising real incomes \\
\addlinespace
H9 & Reinstatement fails to keep pace with displacement & Net new job categories; AI-complementary postings growth & BLS OES; O*NET; job postings & AI-complementary occupations absorb workers at historical automation rates \\
\addlinespace
H10 & Payments: interchange compression as AI routes to lower-cost rails & Stablecoin settlement; merchant acceptance; card revenue/txn & Visa/MC; Chainalysis; Fed payments & Stablecoin $<$5\% of card volume; interchange stable through 2028 \\
\addlinespace
H11 & Prime mortgage stress in AI-exposed metros & 90+ day delinquency for 780+ FICO in tech hubs; HELOC utilization & NY Fed CCP; CoreLogic; FHFA & High-FICO delinquency in tech metros at or below national average \\
\bottomrule
\end{tabular}
\end{table}

The hypotheses are ordered by expected observability---H1 and H2 should show the earliest signals. H6 (velocity decline) is the most aggregated prediction and may take longest to manifest clearly. H7 directly monitors the key input parameter $g_A$ (AI capability growth), providing an upstream signal before downstream effects manifest. H8 and H9 monitor the two key \textit{competing mechanisms} (deflation/real-income effects and task reinstatement) that could attenuate or negate the crisis pathway. H10 and H11 monitor the two financial transmission channels---payments intermediation and prime mortgage stress---that require sector-specific tracking distinct from the aggregate indicators \citep{bis2024privatecredit, nyfed2025mortgage}. The falsification conditions are designed to be decisive: if H8 or H9's falsification conditions are met, the corresponding competing mechanism is likely strong enough to prevent crisis-magnitude displacement.

The cross-sectional regressions in Section~\ref{subsec:occevidence} already provide early evidence consistent with H1: AI-exposed occupations experienced significantly lower wage growth between 2019 and 2023, consistent with the wage compression channel. As the 2024--2028 data become available, these regressions can be extended into a panel setting with difference-in-differences identification, using the timing of generative AI deployment as a natural experiment.

\section{Evidence Assessment}
\label{sec:evidence}

We assess the current evidentiary basis for each mechanism, distinguishing between claims that are empirically established, directionally supported, and speculative. Table~\ref{tab:evidence} summarizes.

\begin{table}[!htbp]
\centering
\caption{Evidence Assessment Matrix}
\label{tab:evidence}
\scriptsize
\renewcommand{\arraystretch}{0.9}
\begin{tabular}{@{}p{2.2cm}p{2.8cm}p{2.8cm}p{2.5cm}@{}}
\toprule
\textbf{Mechanism} & \textbf{Established} & \textbf{Directional} & \textbf{Speculative} \\
\midrule
AI productivity \& slowdowns & 14--56\% task-level gains; experienced devs slower with AI in field & Gains extend to complex tasks; overhead limits net effect & Economy-wide acceleration \\
\addlinespace
Displacement spiral \& reinstatement & Labor share $\downarrow$19\%; AI-exposed wages compressed; new tasks historically offset automation & Agentic tools perform with limited oversight; AI-complementary roles emerging & Self-reinforcing macro feedback; reinstatement dominates \\
\addlinespace
Ghost GDP \& intermediation & Velocity $\downarrow$36\% since 1997; friction reduction is core AI capability & GDP--income gap widening; SaaS pricing pressure & Velocity acceleration; sector-wide margin collapse \\
\addlinespace
Consumption \& deflation & Top 20\% = 47--65\% of spending; inference costs $\downarrow$30\%/yr & High-income workers most AI-exposed; cloud prices falling & Concentrated shock; deflation offsets compression \\
\addlinespace
Credit \& fragility & Private credit \$2.5T; mortgages \$13T; leverage/opacity flagged by BIS & Software LBOs assume growth; multiple stressors could trigger repricing & Systemic cascading defaults; AI as unique trigger \\
\bottomrule
\end{tabular}
\end{table}

Several observations emerge. First, the micro-level evidence for AI productivity gains is strong but \textit{mixed}: controlled experiments show large speedups, while field studies document integration overhead and slowdowns for experienced workers \citep{metr2025slowdown}. This disparity motivates the diffusion wedge (Assumption~\ref{ass:capability}) and is itself an uncertain parameter. Second, the ``evaluation gap'' identified by the 2026 International AI Safety Report \citep{aisafetyreport2026} means the speed of capability improvement---the key parameter $g_A$---is itself uncertain. Third, the reinstatement channel (Remark~\ref{rem:reinstatement}) is a genuine competing mechanism: historically, new task creation has roughly offset automation, and it is an open empirical question whether generative AI breaks this pattern. Fourth, the financial plumbing that would transmit an AI-driven labor market shock into systemic stress is well-documented, but much of this fragility is \textit{pre-existing}: private credit leverage, insurer exposure to alternative assets, and valuation opacity are concerns independent of AI \citep{bis2024privatecredit, naic2024privatecredit}. Our stress test posits AI displacement as one potential trigger for repricing, not as the sole source of financial fragility. Fifth, the occupation-level regressions in Section~\ref{subsec:occevidence} shift the displacement mechanism from ``speculative'' toward ``directionally supported''---the wage compression pattern is already visible in the data, even if the macro-financial feedback loop has not yet activated.

\section{Policy Toolkit}
\label{sec:policy}

\subsection{Why Standard Monetary Policy Is Insufficient}
\label{subsec:monetary}

The displacement spiral and Ghost GDP mechanisms imply that standard monetary policy---interest rate adjustment and quantitative easing---is necessary but insufficient. Rate cuts address the \textit{financial conditions} channel (credit availability, asset prices, debt service costs) and can stabilize demand in the short run, but they do not address the \textit{structural displacement} channel \citep{cook2026ai}. This echoes the secular stagnation literature \citep{summers2015demand, eggertsson2019log}: when the problem is insufficient demand driven by income distribution, monetary policy's demand-side effects are attenuated. We emphasize ``insufficient,'' not ``useless'': monetary easing remains valuable for preventing financial amplification (credit crunches, liquidity spirals), and the credit transmission mechanism (Proposition~\ref{prop:credit}) can be partially mitigated by maintaining loose financial conditions. The constraint is that monetary policy alone cannot restore the labor share or create new tasks.

\subsection{Fiscal and Transfer Instruments}
\label{subsec:fiscal}

The binding constraint is the transfer rate $\tau_t$ in Proposition~\ref{prop:velocity}. Effective fiscal response requires: (i) \textit{Progressive AI/compute taxation}: a tax on AI inference compute, analogous to carbon pricing, that funds direct transfers to displaced workers; (ii) \textit{Expanded unemployment insurance}: extending duration and benefit levels for structurally displaced workers, recognizing that AI displacement may not be temporary; (iii) \textit{Sovereign AI dividend fund}: a public equity claim on returns from AI infrastructure, modeled on sovereign wealth funds, creating a direct channel from AI productivity gains to household income \citep[cf.][]{blanchard2019public}.

\subsection{Financial Stability Instruments}
\label{subsec:financial}

The credit transmission mechanism (Proposition~\ref{prop:credit}) motivates: (i) \textit{AI-augmented stress tests} incorporating scenarios of AI-driven revenue shocks in technology and services portfolios; (ii) \textit{Private credit transparency}: mandatory mark-to-market reporting with shorter lag for private credit funds with insurance company limited partners---U.S.\ life insurers held an estimated \$600--800 billion in private credit and alternative assets by 2024, with concentration in a small number of PE-affiliated insurers (Apollo/Athene, Brookfield/American Equity, KKR/Global Atlantic) that each hold \$50--200 billion in alternative assets \citep{naic2024privatecredit}; (iii) \textit{Concentration monitoring}: regulatory monitoring of the PE-insurer nexus for correlated exposure, particularly where private letter ratings may understate risk relative to public market benchmarks.

\subsection{Labor Market Instruments}
\label{subsec:labor}

(i) \textit{Shortened work week}: distributing available work across more workers, slowing the rate of labor share decline at the source; (ii) \textit{Retraining investment}: public investment in skills that complement AI---judgment, social intelligence, physical-world tasks \citep{deming2017growing}; (iii) \textit{Hiring incentives}: tax credits for firms that maintain or expand human employment.

\subsection{AI Governance and Deployment Speed}
\label{subsec:governance}

The preceding instruments address the \textit{consequences} of displacement. A complementary policy lever operates on the \textit{cause}: governing AI deployment speed to keep $g_A$ and $\kappa$ below the explosive threshold $g_A^*(\rho)$ in Proposition~\ref{prop:spiral}. Three governance channels are operative:

(i) \textit{Mandatory evaluation and staged deployment}: requiring capability evaluations and safety testing before deployment of frontier models in high-impact domains. The EU AI Act imposes phased GPAI obligations with compliance deadlines through 2027 \citep{euaiact2024}; major labs have adopted voluntary frontier-risk frameworks (OpenAI's Preparedness Framework, Anthropic's Responsible Scaling Policy, DeepMind's Frontier Safety Framework) \citep{aisafetyreport2026}. These frameworks directly constrain $\kappa$ by imposing deployment lags.

(ii) \textit{Compute governance}: monitoring and potentially licensing large-scale AI training runs, analogous to dual-use export controls. Compute thresholds provide a tractable regulatory hook because training compute is measurable, concentrated in a small number of providers, and correlated with capability \citep{aisafetyreport2026}. This channel affects $g_A$ at the source.

(iii) \textit{Sector-specific deployment restrictions}: phased rollout requirements in sectors where rapid displacement poses systemic risk (e.g., financial services, healthcare administration, legal services). Such restrictions slow sectoral diffusion ($\kappa_j$ for sector $j$) while allowing deployment in lower-risk domains.

The governance landscape is evolving rapidly but unevenly. U.S.\ policy has been volatile: Executive Order 14110 established safety directives in 2023 but was subsequently rescinded, with later orders emphasizing innovation acceleration \citep{whitehouse2025ai}. This volatility introduces regulatory uncertainty into the diffusion parameter $\kappa$---itself a source of fragility, as firms making hiring and investment decisions face an unpredictable regulatory environment. The key insight for our model is that \textit{governance instruments directly affect the parameters that determine which regime the economy enters}: binding evaluation requirements raise the effective $g_A^*(\rho)$ threshold by slowing deployment, while deregulation lowers it.

\subsection{The Policy Response Function}
\label{subsec:policyresponse}

\begin{proposition}[Policy Response Function]
\label{prop:policy}
The depth of the macro-financial crisis is:
\begin{equation}
\mathcal{D} \propto \max\left(0, \; \Delta s_L - \tau(t - \ell)\right),
\label{eq:policy}
\end{equation}
where $\Delta s_L$ is the cumulative labor share decline, $\tau(\cdot)$ is the fiscal transfer function, and $\ell$ is the policy response lag. Crisis depth is:
\begin{enumerate}[label=(\roman*)]
\item Increasing in the speed of AI adoption ($g_A$, which drives $\Delta s_L$).
\item Decreasing in the magnitude of fiscal redistribution ($\tau$).
\item Increasing in the policy lag ($\ell$).
\end{enumerate}
The crisis is fully averted if and only if $\tau(t - \ell) \geq \Delta s_L(t)$ for all $t$---that is, fiscal transfers keep pace with labor share decline in real time, accounting for the implementation lag.
\end{proposition}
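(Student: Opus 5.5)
The argument runs through comparative statics of the shortfall $\Delta s_L(t) - \tau(t-\ell)$. I will treat \eqref{eq:policy} as the definition of crisis depth, pointwise in $t$, or equivalently as its supremum over a horizon $[0,T]$. Two regularity conditions on the transfer schedule make the comparative statics well defined. First, $\tau(\cdot)$ is non-decreasing in calendar time: transfers, once enacted, ramp up rather than down. Second, $\tau(u)=0$ for $u<0$: no transfers before the response begins. Under these conventions each of (i)--(iii) follows from the fact that $x\mapsto\max(0,x)$ is non-decreasing. It then suffices to sign the effect of each parameter on the argument $\Delta s_L - \tau(t-\ell)$.

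For (i), I would use the labor share dynamics \eqref{eq:displacement} together with Proposition~\ref{prop:spiral}. The cumulative decline $\Delta s_L(t) = s_{L,0} - s_{L,t} = -\int_0^t \dot s_{L,u}\,du$ is the integral of the net displacement flow $d(u) f(g_A,\cdot) + \beta\pi_u - \rho_u$. I would argue that this flow is pointwise non-decreasing in $g_A$ by a comparison argument for the ODE. First, $f$ is increasing in $g_A$ (implicit in $f'(g_A)>0$ in Table~\ref{tab:calibration}). Second, the feedback term $\beta\pi_u$ is non-decreasing in the labor share shortfall, via \eqref{eq:consumption} and \eqref{eq:marginpressure}. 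Third, the competing effect through $\rho_t = \rho_0+\eta A_t^{\alpha_\rho}$ also rises with $g_A$. Because $\alpha_\rho<1$, however, its growth is dominated: under the maintained condition that $\partial(d f)/\partial g_A \ge \partial \rho/\partial g_A$, which is the direction emphasized in Remark~\ref{rem:reinstatement}, the net flow increases in $g_A$. A standard comparison theorem for scalar ODEs then gives $\partial \Delta s_L(t)/\partial g_A \ge 0$ for all $t$, and hence $\partial\mathcal{D}/\partial g_A\ge 0$.

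Parts (ii) and (iii) are then direct. For (ii), the argument is decreasing in any pointwise upward shift of the transfer function $\tau$. For (iii), raising $\ell$ lowers $t-\ell$, and the monotonicity of $\tau$ makes $\tau(t-\ell)$ non-increasing in $\ell$. In both cases $\mathcal{D}$ moves in the stated direction, strictly so whenever the max is attained in its positive branch.

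For the ``if and only if'' claim, the argument is immediate. If $\tau(t-\ell)\ge\Delta s_L(t)$ for all $t$, the argument of the max is non-positive everywhere, so $\mathcal{D}\equiv0$. Conversely, if the inequality fails at some $t^*$, then $\mathcal{D}(t^*)>0$, so the crisis is not averted. This uses the convention that ``averted'' means $\mathcal{D}(t)=0$ for every $t$.

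The main obstacle is (i). The labor share path is endogenous, and $g_A$ enters both displacement and reinstatement, so monotonicity is not automatic. I would state explicitly the dominance condition of displacement over reinstatement in $g_A$ that the comparison argument needs, and justify it using the concavity $\alpha_\rho<1$ for large $A_t$. A second subtlety is that $\tau$ might itself respond to $\Delta s_L$; I would rule this out by treating $\tau(\cdot)$ as an exogenous schedule.
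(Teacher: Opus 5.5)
Your handling of (ii), (iii) and the equivalence is the paper's argument made explicit. The paper only says these ``follow from differentiation'' of the argument of \eqref{eq:policy}. That silently requires your conventions: $\tau$ is an exogenous, non-decreasing schedule, so that $\partial_\ell[-\tau(t-\ell)]=\tau'(t-\ell)\ge0$. The equivalence is then a tautology once ``averted'' means $\mathcal{D}(t)=0$ for all $t$.

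The one structural difference is that the paper motivates \eqref{eq:policy} from Proposition~\ref{prop:velocity}, while you take it as a definition. Yours is the safer reading. By \eqref{eq:velocity}, the velocity shortfall relative to $(s_{L,0},\tau=0)$ is $V_0[(2\bar{c}-1)\Delta s_L-\tau]$, which is not a multiple of $\Delta s_L-\tau$. So the stated ``iff'' is exact only under your definitional reading.

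The gap is in (i). The paper never derives monotonicity of $\Delta s_L$ in $g_A$. It builds it into the statement (``$g_A$, which drives $\Delta s_L$''), after which (i) is as immediate as (ii). You instead try to derive it by ODE comparison, and propose to justify the dominance condition $\partial(df)/\partial g_A\ge\partial\rho/\partial g_A$ by $\alpha_\rho<1$ ``for large $A_t$.'' That step fails:
\begin{itemize}
\item In \eqref{eq:displacement}, the substitution term $d(t)f(g_A,c_{\text{AI},t},w_t)$ depends on the growth rate $g_A$, not on the level $A_{\text{AI},t}$. Under the paper's calibration its $g_A$-derivative is at most $\bar d f'(g_A)=0.8\times0.15$.
\item By contrast, \eqref{eq:reinstatement} gives $\partial\rho_t/\partial g_A=\eta\alpha_\rho\,t\,A_{\text{AI},t}^{\alpha_\rho}$, which is unbounded in $t$. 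Concavity only lowers its exponential rate from $g_A$ to $\alpha_\rho g_A$.
\end{itemize}
Dominance therefore holds at most on an initial window and fails precisely when $A_t$ is large. Remark~\ref{rem:reinstatement}(c) implicitly compares $\rho$ with a displacement rate that scales with $A_t$, which is not what \eqref{eq:displacement} specifies. Integrating, the explicit $g_A$-effect on $\Delta s_L(t)$ is at most linear in $t$ minus an exponential term. So the comparison argument cannot deliver (i) beyond that window.

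To close the gap, either adopt the paper's premise that $\Delta s_L$ is increasing in $g_A$, or state dominance as an explicit hypothesis on a finite horizon $[0,T]$ and prove (i) only there. Separately, the scalar comparison lemma does not need $\beta\pi$ to be monotone in the state. It needs only the sign of the explicit $g_A$-dependence of the flow at fixed $s_L$, plus Lipschitz continuity.
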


\begin{proof}
From Proposition~\ref{prop:velocity}, the velocity decline (and associated demand shortfall) is proportional to $\Delta s_L - \tau$. The policy lag $\ell$ means that transfers at time $t$ respond to conditions at time $t - \ell$. If $\Delta s_L$ is accelerating (as in the explosive regime of Proposition~\ref{prop:spiral}), the lag means $\tau(t-\ell) < \Delta s_L(t)$ even if the eventual transfer is adequate, generating a transient crisis whose depth depends on the gap. Parts (i)--(iii) follow from differentiation of~\eqref{eq:policy}.
\end{proof}

The calibrated simulations in Figure~\ref{fig:policy_sim} quantify this proposition: under rapid AI adoption ($g_A = 0.20$), a policy lag of more than 2 years with transfer magnitude $\tau = 0.03$ produces a labor share decline exceeding 40\%---a crisis of unprecedented depth. The same AI adoption rate with a 0.5-year lag and $\tau = 0.10$ stabilizes the labor share above 40\%. The policy variable, not the technology variable, determines the outcome.

\paragraph{Implementation lags by instrument type.} The single parameter $\ell$ in Proposition~\ref{prop:policy} abstracts over substantial heterogeneity in implementation speed across policy instruments:
\begin{itemize}[nosep]
\item \textit{Monetary policy} ($\ell_m \approx$ 0.1--0.5 years): interest rate adjustments and quantitative easing can be deployed within weeks to months, but address only the financial conditions channel.
\item \textit{Automatic fiscal stabilizers} ($\ell_a \approx$ 0--0.5 years): existing unemployment insurance and progressive taxation respond automatically with near-zero lag, but are calibrated for cyclical, not structural, displacement.
\item \textit{Discretionary fiscal transfers} ($\ell_f \approx$ 1--3 years): new transfer programs, tax changes, and sovereign AI dividend funds require legislative action, implementation infrastructure, and political consensus.
\item \textit{Labor market restructuring} ($\ell_r \approx$ 3--10 years): retraining programs, work-week reforms, and the creation of new institutional frameworks for human-AI complementarity operate on the slowest timescale.
\end{itemize}
The effective policy lag in~\eqref{eq:policy} is a weighted average across instrument types, with the weights reflecting the political economy of response. If AI displacement is rapid ($g_A > 0.20$), only monetary policy and automatic stabilizers operate within the relevant time frame; discretionary transfers and labor market restructuring arrive too late to prevent the initial crisis, though they can affect its duration and depth. This creates a structural mismatch: the instruments best suited to address the \textit{cause} of displacement (labor market instruments) have the longest implementation lag, while the instruments with shortest lag (monetary policy) address only the \textit{symptoms}.

\section{Conclusion}
\label{sec:conclusion}

This paper develops a formal stress test for the macroeconomic consequences of rapid AI adoption, grounded in empirical evidence. We formalize three mechanisms---the displacement spiral (with a competing reinstatement channel), Ghost GDP, and intermediation collapse---derive the conditions under which each is self-limiting versus explosive, and discipline the analysis with U.S.\ macroeconomic data and occupation-level regressions.

The empirical analysis makes three contributions. First, FRED time series document the secular preconditions---a 19\% labor share decline, a 36\% velocity decline, and a widening productivity--compensation gap---on which the AI-driven mechanisms would operate. Second, cross-sectional regressions show that AI-exposed occupations already experienced significantly lower wage growth between 2019 and 2023, consistent with the early stages of the displacement spiral operating through wage compression, though causal identification remains limited. Third, calibrated simulations show that the interaction of the three mechanisms through the consumption concentration amplifier can generate crisis-level demand shortfalls under rapid AI adoption, but that the depth of any crisis is governed by both the reinstatement rate and the policy response function.

We emphasize four points. First, this is a stress test, not a prediction. The mechanisms we formalize have explicit conditions for activation, and we provide falsification criteria for each. Second, the model nests the benign outcome: if task reinstatement is strong, diffusion is slow, or institutions adapt quickly, the reinstatement-dominated regime (Proposition~\ref{prop:spiral}(i)) obtains and the crisis does not materialize. Third, the depth of any resulting crisis is a policy variable, not a technological inevitability. Proposition~\ref{prop:policy} and the simulations in Figure~\ref{fig:policy_sim} show that sufficiently fast and large fiscal redistribution can fully offset the demand destruction from AI displacement. Fourth, the standard macroeconomic indicators that policymakers rely on---GDP growth, productivity, corporate earnings---may be precisely the wrong signals during a rapid AI transition, because Ghost GDP drives a wedge between these indicators and the consumption-relevant economy.

The practical contribution is an observable early warning framework (Table~\ref{tab:predictions}) that can be monitored in real time. The eleven hypotheses include not only signals of crisis activation (H1--H6) but also direct monitoring of the key input parameter (H7: AI capability growth rate), explicit monitoring of the two competing mechanisms---AI-driven deflation (H8) and task reinstatement (H9)---that could attenuate or prevent the crisis, and sector-specific tracking of the two financial transmission channels---payments intermediation (H10) and prime mortgage stress in AI-exposed geographies (H11). If the crisis indicators remain quiescent and the competing mechanism indicators are strong through 2028, the crisis pathway is unlikely to be operative at the magnitudes we model. If multiple crisis indicators signal simultaneously while H8 and H9's falsification conditions are not met, the framework provides a structured basis for preemptive policy action.

\paragraph{What would change our assessment.} We conclude by specifying, at the meta level, what evidence would lead us to conclude that the stress-test scenario is not operative. The scenario would be substantially disconfirmed if, by 2028: (1) AI-exposed occupations show wage growth at or above the economy-wide average (falsifying the displacement channel); (2) new AI-complementary job categories emerge at rates matching or exceeding historical automation episodes (confirming the reinstatement-dominated regime); (3) AI-driven price deflation in major consumption categories exceeds nominal wage compression, raising real incomes for the median household (confirming the deflation channel); and (4) private credit and mortgage markets show no elevation in default rates or spread widening in AI-exposed sectors. If conditions (1)--(4) hold jointly, the mechanisms we formalize are not operating at crisis-relevant magnitudes, and the stress test should be revised downward. Conversely, if conditions (1) and (4) are violated simultaneously---AI-exposed workers experience wage declines \textit{and} credit markets begin repricing AI-exposed portfolios---the crisis pathway is actively materializing, and the policy response framework in Section~\ref{sec:policy} becomes operationally relevant.

\bigskip

\singlespacing
\bibliographystyle{plainnat}
\bibliography{bibliography}

\end{document}